\documentclass{article} 
\usepackage{nips14submit_e,times}
\usepackage{amssymb,amsmath,multirow,amsthm,wrapfig}
\usepackage{hyperref}
\usepackage{url}
\usepackage{graphicx} 

\title{Robust Kernel Density Estimation by Scaling and Projection in Hilbert Space}

\author{
Robert A. Vandermeulen\\
Department of EECS\\
University of Michigan\\
Ann Arbor, MI 48109 \\
\texttt{rvdm@umich.edu} \\
\And
Clayton D. Scott \\
Deparment of EECS\\
Univeristy of Michigan\\
Ann Arbor, MI 48109 \\
\texttt{clayscot@umich.edu} \\
}

%

\def\rn{\mathbb{R}}

\def\dsign{{\mathcal{D}_\sigma^n}}
\def\ftiln{\widetilde{f}_\sigma^n}
\def\fsign{f_\sigma^n}
\def\l{\left}
\def\r{\right}
\def\fsigbar{\bar{f}_\sigma}
\def\ftar{f_{tar}}
\def\fobs{f_{obs}}
\def\wfobs{\widehat{f}_{obs}}
\def\fcon{f_{con}}
\def\dcon{\mathcal{D}_{con}}

\def\sD{\mathcal{D}}
\def\sV{\mathcal{V}}
\def\cip{\,{\buildrel p \over \rightarrow}\,}
\def\supp{\operatorname{supp}}
\def\bspkde{f_{\sigma,\beta}^n}

\def\spkde{f_{\sigma,\beta}^n}
\def\fsign{\spkde}
\def\binfspkde{f_\beta'}
\def\tispkde{\widetilde{f}_\beta}
\DeclareMathOperator*{\esssup}{ess\,sup}
\def\ispkde{f_\beta'}

\def\rspkde{R_\varepsilon^{A}}
\newtheorem{lem}{Lemma}
\newtheorem*{lem*}{Lemma}
\newtheorem{thm}{Theorem}
\newtheorem{cor}{Corollary}
\newtheorem{prop}{Proposition}
\newtheorem*{assm*}{Assumption}
\newtheoremstyle{named}{}{}{\itshape}{}{\bfseries}{.}{.5em}{#1 \thmnote{#3}}
\theoremstyle{named}
\newtheorem*{namedassm}{Assumption}
\nipsfinalcopy 

\begin{document}
\setlength{\tabcolsep}{4.4pt} 

\maketitle

\begin{abstract}
While robust parameter estimation has been well studied in parametric density estimation, there has been little investigation into robust density estimation in the nonparametric setting. We present a robust version of the popular kernel density estimator (KDE). As with other estimators, a robust version of the KDE is useful since sample contamination is a common issue with datasets. What ``robustness'' means for a nonparametric density estimate is not straightforward and is a topic we explore in this paper. To construct a robust KDE we scale the traditional KDE and project it to its nearest weighted KDE in the $L^2$ norm. This yields a scaled and projected KDE (SPKDE). Because the squared $L^2$ norm penalizes point-wise errors superlinearly this causes the weighted KDE to allocate more weight to high density regions. We demonstrate the robustness of the SPKDE with numerical experiments and a consistency result which shows that asymptotically the SPKDE recovers the uncontaminated density under sufficient conditions on the contamination.
\end{abstract}

\section{Introduction}

The estimation of a probability density function (pdf) from a random sample is a ubiquitous problem in statistics.
Methods for density estimation can be divided into parametric and nonparametric, depending on whether parametric models are appropriate. Nonparametric density estimators (NDEs) offer the advantage of working under more general assumptions, but they also have disadvantages with respect to their parametric counterparts. One of these disadvantages is the apparent difficulty in making NDEs robust, which is desirable when the data follow not the density of interest, but rather a contaminated version thereof. In this work we propose a robust version of the KDE, which serves as the workhorse among NDEs \cite{silverman86, scottdw92}. 

We consider the situation where most observations come from a target density $\ftar$ but some observations are drawn from a contaminating density $\fcon$, so our observed samples come from the density $\fobs = \l(1-\varepsilon\r)\ftar + \varepsilon \fcon$. It is not known which component a given observation comes from. When considering this scenario in the infinite sample setting we would like to construct some transform that, when applied to $\fobs$, yields $\ftar$. We introduce a new formalism to describe transformations that ``decontaminate'' $\fobs$ under sufficient conditions on $\ftar$ and $\fcon$. We focus on a specific nonparametric condition on $\ftar$ and $\fcon$ that reflects the intuition that the contamination manifests in low density regions of $\ftar$. In the finite sample setting, we seek a NDE that converges to $\ftar$ asymptotically. Thus, we construct a weighted KDE where the kernel weights are lower in low density regions and higher in high density regions. To do this we multiply the standard KDE by a real value greater than one (scale) and then find the closest pdf to the scaled KDE in the $L^2$ norm (project), resulting in a scaled and projected kernel density estimator (SPKDE). Because the squared $L^2$ norm penalizes point-wise differences between functions quadratically, this causes the SPKDE to draw weight from the low density areas of the KDE and move it to high density areas to get a more uniform difference to the scaled KDE. The asymptotic limit of the SPKDE is a scaled and shifted version of $\fobs$. Given our proposed sufficient conditions on $\ftar$ and $\fcon$, the SPKDE can asymptotically recover $\ftar$. 

A different construction for a robust kernel density estimator, the aptly named ``robust kernel density estimator'' (RKDE), was developed by Kim \& Scott \cite{rkdejmlr}. In that paper the RKDE was analytically and experimentally shown to be robust, but no consistency result was presented. Vandermeulen \& Scott \cite{rv13} proved that a certain version of the RKDE converges to $\fobs$. To our knowledge the convergence of the SPKDE to a transformed version of $\fobs$, which is equal to $\ftar$ under sufficient conditions on $\ftar$ and $\fcon$, is the first result of its type.

In this paper we present a new formalism for nonparametric density estimation, necessary and sufficient conditions for decontamination, the construction of the SPKDE, and a proof of consistency. We also include experimental results applying the algorithm to benchmark datasets with comparisons to the RKDE, traditional KDE, and an alternative robust KDE implementation. Many of our results and proof techniques are novel in KDE literature. Proofs are contained in the appendix.

\section{Nonparametric Contamination Models and Decontamination Procedures for Density Estimation}
What assumptions are necessary and sufficient on a target and contaminating density in order to theoretically recover the target density is a question that, to the best of our knowledge, is completely unexplored in a nonparametric setting. We will approach this problem in the infinite sample setting, where we know $\fobs = (1-\varepsilon)\ftar  + \varepsilon\fcon $ and $\varepsilon$, but do not know $\ftar$ or $\fcon$. To this end we introduce a new formalism. Let $\sD$ be the set of all pdfs on $\rn^d$. We use the term {\it contamination model} to refer to any subset $\sV \subset \sD \times \sD$, i.e. a set of pairs $(\ftar,\fcon)$. Let $R_\varepsilon: \sD \to \sD$ be a set of transformations on $\sD$ indexed by $\varepsilon \in [0,1)$. We say that $R_\varepsilon$ {\it decontaminates} $\sV$ if for all $(\ftar,\fcon) \in \sV$ and $\varepsilon \in [0,1)$ we have $R_\varepsilon((1-\varepsilon)\ftar  + \varepsilon\fcon ) = \ftar$. 

One may wonder whether there exists some set of contaminating densities, $\dcon$, and a transformation, $R_\varepsilon$, such that $R_\varepsilon$ decontaminates $\sD \times \dcon$. In other words, does there exist some set of contaminating densities for which we can recover any target density? It turns out this is impossible if $\dcon$ contains at least two elements.

\begin{prop}
	\label{neg}
	Let $\dcon \subset \sD$ contain at least two elements. There does not exist any transformation $R_\varepsilon$ which decontaminates $\sD \times \dcon$.
\end{prop}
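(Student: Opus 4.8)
The plan is to reach a contradiction by producing a single observed density that admits two different decompositions, both admissible for $\sD \times \dcon$, thereby forcing any decontaminating $R_\varepsilon$ to return two distinct targets on one identical input. First I would use the hypothesis that $\dcon$ contains at least two elements to fix distinct densities $g_1 \neq g_2$ in $\dcon$. The pivotal observation is that $\dcon \subset \sD$, so each $g_i$ is simultaneously a legitimate target and a legitimate contaminant; this is exactly what allows the roles of the two densities to be swapped.

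Next I would specialize to $\varepsilon = \tfrac{1}{2}$ and consider the observed density $\fobs = \tfrac{1}{2} g_1 + \tfrac{1}{2} g_2$. Exploiting the symmetry of this mixture, I would write it in two ways as a $\tfrac{1}{2}$-contaminated target: as $(1-\varepsilon) g_2 + \varepsilon g_1$, corresponding to the pair $(g_2, g_1) \in \sD \times \dcon$ with target $g_2$; and as $(1-\varepsilon) g_1 + \varepsilon g_2$, corresponding to $(g_1, g_2) \in \sD \times \dcon$ with target $g_1$. Both decompositions yield exactly the same function $\fobs$.

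Finally, I would invoke the definition of decontamination on each decomposition. The first forces $R_{1/2}(\fobs) = g_2$, while the second forces $R_{1/2}(\fobs) = g_1$. Since the argument of $R_{1/2}$ is the identical density $\fobs$ in both cases, this yields $g_1 = g_2$, contradicting their distinctness and completing the proof. The argument is very short, so the only genuine difficulty is conceptual rather than computational: recognizing that the containment $\dcon \subset \sD$ permits target and contaminant to be interchanged, and that the symmetric choice $\varepsilon = \tfrac{1}{2}$ makes the two swapped mixtures coincide on the nose, so that no auxiliary densities outside $\dcon$ need to be constructed.
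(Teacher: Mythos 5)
Your proof is correct: the pairs $(g_2,g_1)$ and $(g_1,g_2)$ both lie in $\sD \times \dcon$ because $\dcon \subset \sD$, the two mixtures coincide exactly at $\varepsilon = \tfrac{1}{2}$, and since the paper's definition of decontamination demands $R_\varepsilon\left((1-\varepsilon)\ftar + \varepsilon\fcon\right) = \ftar$ for \emph{all} $\varepsilon \in [0,1)$, exhibiting a failure at the single value $\varepsilon = \tfrac{1}{2}$ suffices. The underlying mechanism is the same as the paper's---one observed density admitting two admissible decompositions forces $R_\varepsilon$ to be multi-valued---but your construction differs. The paper fixes an arbitrary $f \in \sD$, takes any $\varepsilon \in (0,\tfrac{1}{2})$, and builds two targets $\ftar = \frac{f(1-2\varepsilon)+\varepsilon g}{1-\varepsilon}$ and $\ftar' = \frac{f(1-2\varepsilon)+\varepsilon g'}{1-\varepsilon}$ whose contaminated mixtures agree; your argument is the degenerate endpoint of that construction at $\varepsilon = \tfrac{1}{2}$, where $\ftar$ and $\ftar'$ collapse to $g$ and $g'$ themselves and the auxiliary $f$ disappears. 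What you gain is minimality: no auxiliary density, and the symmetry makes the coincidence of the two mixtures immediate. What the paper's version buys is robustness and scope: it shows the ambiguity occurs at \emph{every} contamination level $\varepsilon \in (0,\tfrac{1}{2})$ and for targets built as perturbations of an \emph{arbitrary} $f \in \sD$, so the impossibility would persist even under a weakened definition requiring decontamination only for small $\varepsilon$ (the practically relevant regime), whereas your argument hinges on $\varepsilon = \tfrac{1}{2}$ being admissible and would break under such a restriction.
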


\begin{proof}
	Let $f \in \sD$ and $g,g' \in \dcon$ such that $g \neq g'$. Let $\varepsilon \in (0,\frac{1}{2})$. Clearly $\ftar \triangleq \frac{f(1-2\varepsilon) + g\varepsilon}{1-\varepsilon}$ and $\ftar'\triangleq \frac{f(1-2\varepsilon) + \varepsilon g'}{1-\varepsilon}$ are both elements of $\sD$. Note that 
	$$(1-\varepsilon)\ftar + \varepsilon g' =(1-\varepsilon)\ftar' + \varepsilon g.$$
	In order for $R_\varepsilon$ to decontaminate $\sD$ with respect to $\dcon$, we need $R_\varepsilon\left((1-\varepsilon)\ftar + \varepsilon g' \right) = \ftar$ and $R_\varepsilon\left( (1-\varepsilon)\ftar' + \varepsilon g \right) = \ftar'$, which is impossible since $\ftar \neq \ftar'$.
\end{proof}

This proposition imposes significant limitations on what contamination models can be decontaminated. For example, suppose we know that $\fcon$ is Gaussian with known covariance matrix and unknown mean. Proposition \ref{neg} says we cannot design $R_\varepsilon$ so that it can decontaminate $(1-\varepsilon) \ftar + \varepsilon \fcon$ for all $\ftar \in \sD$. In other words, it is impossible to design an algorithm capable of removing Gaussian contamination (for example) from arbitrary target densities. Furthermore, if $R_\varepsilon$ decontaminates $\sV$ and $\sV$ is fully nonparametric (i.e. for all $f\in\sD$ there exists some $f'\in\sD$ such that $(f,f')\in \sV$) then for each $(\ftar,\fcon)$ pair, $\fcon$ must satisfy some properties which depend on $\ftar$.

\subsection{Proposed Contamination Model}
For a function $f:\rn^d \to \rn$ let $\supp(f)$ denote the support of $f$. We introduce the following contamination assumption:
\begin{namedassm}[A]
	For the pair $(\ftar,\fcon)$, there exists $u$ such that $\fcon(x) = u$ for almost all (in the Lebesgue sense) $x\in \supp(\ftar)$ and $\fcon(x') \le u$ for almost all $x' \notin \supp(\ftar)$.
\end{namedassm}
See Figure 1 for an example of a density satisfying this assumption. Because $\fcon$ must be uniform over the support of $\ftar$ a consequence of Assumption A is that $\supp(\ftar)$ has finite Lebesgue measure. Let $\sV_A$ be the contamination model containing all pairs of densities which satisfy Assumption A. Note that $\bigcup_{(\ftar,\fcon)\in \sV_A}\ftar$ is exactly all densities whose support has finite Lebesgue measure, which includes all densities with compact support.

The uniformity assumption on $\fcon$ is a common ``noninformative" assumption on the contamination. Furthermore, this assumption is supported by connections to one-class classification. In that problem, only one class (corresponding to our $\ftar$) is observed for training, but the testing data is drawn from $\fobs$ and must be classified. The dominant paradigm for nonparametric one-class classification is to estimate a level set of $\ftar$ from the one observed training class \cite{theiler,mpm,ingo05anomaly,vert06,NIPS2011_0354,scholkopf01oneclass}, and classify test data according to that level set. Yet level sets only yield optimal classifiers (i.e. likelihood ratio tests) under the uniformity assumption on $\fcon$, so that these methods are implicitly adopting this assumption. Furthermore, a uniform contamination prior has been shown to optimize the worst-case detection rate among all choices for the unknown contamination density \cite{yaniv06}. Finally, our experiments demonstrate that the SPKDE works well in practice, even when Assumption A is significantly violated.

\subsection{Decontamination Procedure}
Under Assumption A $\ftar$ is present in $\fobs$ and its shape is left unmodified (up to a multiplicative factor) by $\fcon$. To recover $\ftar$ it is necessary to first scale $\fobs$ by $\beta = \frac{1}{1-\varepsilon}$ yielding 
\begin{eqnarray}
	\frac{1}{1-\varepsilon} \left( (1-\varepsilon)\ftar + \varepsilon \fcon \right) = \ftar + \frac{\varepsilon}{1-\varepsilon}\fcon.
	\label{scaled}
\end{eqnarray}
After scaling we would like to slice off $\frac{\varepsilon}{1-\varepsilon}\fcon$ from the bottom of $\ftar + \frac{\varepsilon}{1-\varepsilon}\fcon$. This transform is achieved by 
\begin{eqnarray}
	\max\left\{ 0,  \ftar + \frac{\varepsilon}{1-\varepsilon}\fcon - \alpha \right\},
	\label{sliced}
\end{eqnarray}
where $\alpha$ is set such that \ref{sliced} is a pdf (which in this case is achieved with $\alpha = r \frac{\varepsilon}{1-\varepsilon}$). We will now show that this transform is well defined in a general sense. Let $f$ be a pdf and let 
\begin{eqnarray*}
	g_{\beta,\alpha} =  \max \l\{0,\beta f\l(\cdot\r) -\alpha\r\}
\end{eqnarray*}
where the max is defined pointwise. The following lemma shows that it is possible to slice off the bottom of any scaled pdf to get a transformed pdf and that the transformed pdf is unique. 

\begin{lem}\label{lem:popspkde}
	For fixed $\beta>1$ there exists a unique $\alpha'>0$ such that $\l\|g_{\beta,\alpha'}\r\|_{L^1} =1$.
\end{lem}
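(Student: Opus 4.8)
The plan is to study the scalar function $\Phi(\alpha) \triangleq \l\|g_{\beta,\alpha}\r\|_{L^1} = \int_{\rn^d}\max\l\{0,\beta f(x)-\alpha\r\}\,dx$ on $\alpha\in[0,\infty)$ and to establish four properties: $\Phi$ is continuous, $\Phi(0)=\beta>1$, $\Phi(\alpha)\to 0$ as $\alpha\to\infty$, and $\Phi$ is strictly decreasing wherever it is positive. Existence of $\alpha'$ then follows from the intermediate value theorem, and uniqueness from strict monotonicity.

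First I would record the boundary behavior. Since $f\ge 0$ we have $g_{\beta,0}=\beta f$, so $\Phi(0)=\beta\int f=\beta>1$. For the limit at infinity, note that for a.e. $x$ we have $f(x)<\infty$ (because $f$ is integrable), so $g_{\beta,\alpha}(x)$ decreases to $0$ pointwise as $\alpha\to\infty$; since $0\le g_{\beta,\alpha}\le \beta f$ with $\beta f$ integrable, dominated convergence gives $\Phi(\alpha)\to 0$. The same domination, applied along an arbitrary sequence $\alpha_n\to\alpha$ together with continuity of $t\mapsto\max\{0,t\}$, shows $\Phi$ is continuous on $[0,\infty)$.

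The crux is strict monotonicity. For $0\le\alpha_1<\alpha_2$ I would analyze the pointwise difference $g_{\beta,\alpha_1}(x)-g_{\beta,\alpha_2}(x)$, which equals $\alpha_2-\alpha_1$ where $\beta f(x)>\alpha_2$, equals $\beta f(x)-\alpha_1$ where $\alpha_1<\beta f(x)\le\alpha_2$, and equals $0$ where $\beta f(x)\le\alpha_1$. Thus the difference is nonnegative everywhere and strictly positive exactly on $\{x:\beta f(x)>\alpha_1\}$. If $\Phi(\alpha_1)>0$, this set has positive Lebesgue measure, so integrating the difference yields $\Phi(\alpha_1)>\Phi(\alpha_2)$. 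I expect this measure-positivity bookkeeping, namely translating $\Phi(\alpha_1)>0$ into positive measure of $\{\beta f>\alpha_1\}$, to be the only genuinely delicate point.

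Finally I would assemble the pieces. Continuity with $\Phi(0)=\beta>1$ and $\Phi(\alpha)\to 0$ lets the intermediate value theorem produce $\alpha'$ with $\Phi(\alpha')=1$; since $\Phi(0)>1=\Phi(\alpha')$ and $\Phi$ is strictly decreasing wherever positive, this $\alpha'$ must be strictly positive. For uniqueness, if $\Phi(\alpha_1')=\Phi(\alpha_2')=1$ with $\alpha_1'<\alpha_2'$, then $\Phi(\alpha_1')=1>0$ forces $\Phi(\alpha_2')<\Phi(\alpha_1')$, a contradiction. Hence $\alpha'$ exists and is unique.
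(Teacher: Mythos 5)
Your proof is correct, and its existence half (dominated convergence for continuity of $\alpha \mapsto \Phi(\alpha)$, the boundary values $\Phi(0)=\beta>1$ and $\Phi(\alpha)\to 0$, then the intermediate value theorem) matches the paper essentially line for line -- if anything you are a bit more careful, since the paper merely asserts the limit at infinity that you justify with a second application of dominated convergence. The uniqueness half is where you genuinely diverge. The paper never argues monotonicity of $\Phi$: it proves Lemmas 1 and 2 simultaneously, verifying the variational inequality $\langle c - g_{\beta,\alpha'},\, \beta f - g_{\beta,\alpha'} \rangle_{L^2} \le 0$ for all $c$ in the $L^2$-closure of the set of pdfs, which identifies $g_{\beta,\alpha'}$ as the metric projection of $\beta f$ onto that closed convex set; since the projection is unique, a second $\alpha''$ making $g_{\beta,\alpha''}$ a pdf would produce a second minimizer, a contradiction. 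Your route -- the pointwise case analysis showing $g_{\beta,\alpha_1}-g_{\beta,\alpha_2}$ is nonnegative everywhere and strictly positive exactly on $\{\beta f > \alpha_1\}$, a set of positive measure whenever $\Phi(\alpha_1)>0$, hence $\Phi$ is strictly decreasing while positive -- is more elementary and strictly more general: it needs only $f \in L^1$, whereas the paper's argument requires $f \in L^2$ (its combined proof explicitly takes the $f$ of Lemma 1 to be the $L^2$ density of Lemma 2), even though the paper later invokes Lemma 1, e.g.\ in the proof of Proposition 3, for densities carrying no stated $L^2$ hypothesis; your proof closes that small gap. What the paper's approach buys in exchange is economy: the same variational computation that settles uniqueness also proves Lemma 2, identifying $\max\{\beta f(\cdot)-\alpha',0\}$ as the infinite-sample SPKDE $f_\beta'$, which your argument does not address and which is the fact the consistency theorems actually consume.
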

Figure \ref{infspkde} demonstrates this transformation applied to a pdf. We define the following transform $\rspkde:\sD \to \sD$ where $\rspkde(f) = \max\left\{\frac{1}{1-\varepsilon}f(\cdot)-\alpha,0  \right\}$ where $\alpha$ is such that $\rspkde(f)$ is a pdf.
\begin{wrapfigure}{}{0.4\textwidth}
	\begin{center}
		\fbox{\includegraphics[width=0.3\textwidth]{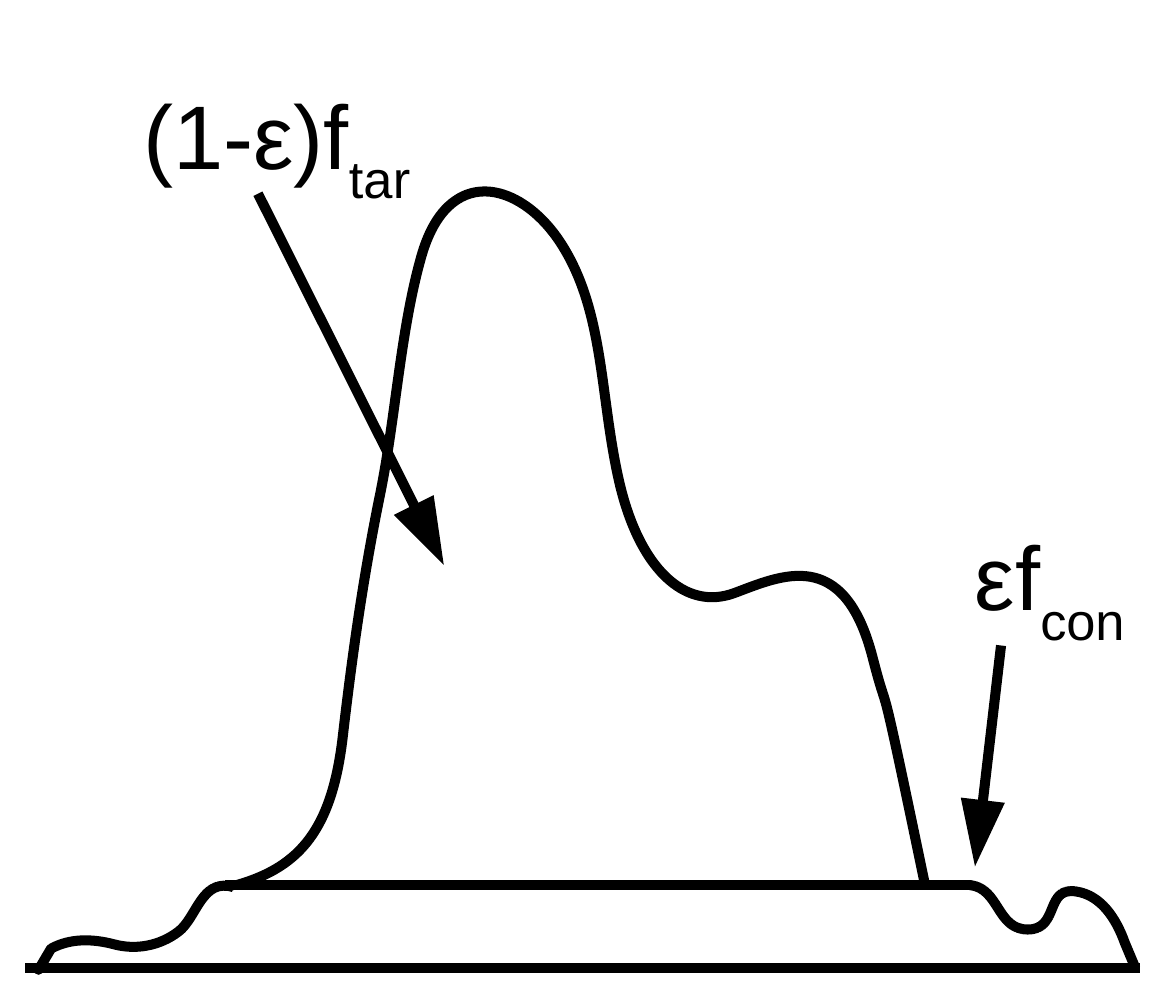}}
	\end{center}
	\caption{Density with contamination satisfying Assumption A}
\end{wrapfigure}
\begin{wrapfigure}{}{0.6\textwidth}
	\vspace{-2em}
	\begin{center}
		\fbox{\includegraphics[width=0.57\textwidth]{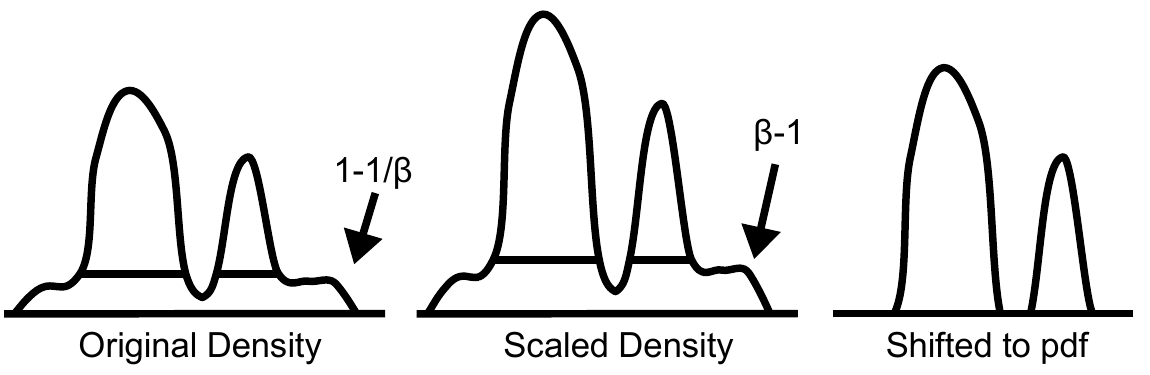}}
	\end{center}
	\caption{Infinite sample SPKDE transform. Arrows indicate the area under the line.}
	\label{infspkde}
	\vspace{-1em}
\end{wrapfigure}
\begin{prop}
	$\rspkde$ decontaminates $\sV_A$.
	\label{deconprop}
\end{prop}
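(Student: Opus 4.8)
The plan is to directly exhibit the slicing constant and verify it works, then use Lemma \ref{lem:popspkde} to conclude it is the \emph{only} such constant. Fix $(\ftar,\fcon) \in \sV_A$ and $\varepsilon \in [0,1)$, write $\fobs = (1-\varepsilon)\ftar + \varepsilon\fcon$, and let $u$ be the value from Assumption A, i.e. $\fcon = u$ a.e.\ on $\supp(\ftar)$ and $\fcon \le u$ a.e.\ off $\supp(\ftar)$. The case $\varepsilon = 0$ is immediate, since then $\fobs = \ftar$ and $\rspkde$ reduces to the identity; so I would assume $\varepsilon \in (0,1)$, which gives $\beta \triangleq \frac{1}{1-\varepsilon} > 1$, exactly the regime where Lemma \ref{lem:popspkde} applies.

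First I would dispose of the degenerate possibility $u = 0$: if $u = 0$ then Assumption A forces $\fcon = 0$ almost everywhere, contradicting $\|\fcon\|_{L^1} = 1$. Hence $u > 0$, and moreover $u \cdot \operatorname{Leb}(\supp(\ftar)) \le \|\fcon\|_{L^1} = 1$, confirming that $\supp(\ftar)$ has finite Lebesgue measure. With $u > 0$ secured, define the candidate slicing constant $\alpha^\ast \triangleq \frac{\varepsilon}{1-\varepsilon}\,u > 0$.

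The core of the argument is a pointwise (a.e.) case analysis establishing $\max\{\beta\fobs - \alpha^\ast,\, 0\} = \ftar$. On $\supp(\ftar)$ we have $\fcon = u$ a.e., so $\beta\fobs = \ftar + \frac{\varepsilon}{1-\varepsilon}u$; subtracting $\alpha^\ast$ and taking the positive part returns $\ftar$, using $\ftar \ge 0$. Off $\supp(\ftar)$ we have $\ftar = 0$ and $\fcon \le u$ a.e., so $\beta\fobs = \frac{\varepsilon}{1-\varepsilon}\fcon \le \alpha^\ast$, whence the positive part vanishes and equals $0 = \ftar$. Combining the two regions gives $\max\{\beta\fobs - \alpha^\ast,\, 0\} = \ftar$ almost everywhere.

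Finally, since $\ftar$ is a pdf, $\|\max\{\beta\fobs - \alpha^\ast,\, 0\}\|_{L^1} = 1$, so $\alpha^\ast$ is a legitimate slicing constant; by the uniqueness clause of Lemma \ref{lem:popspkde} it is \emph{the} constant defining $\rspkde(\fobs)$, and therefore $\rspkde(\fobs) = \ftar$. The only mildly delicate points I anticipate are the off-support bookkeeping (applying the inequality $\fcon \le u$ only on the complement of a null set and checking the positive part truly vanishes there) and the careful handling of the edge cases $\varepsilon = 0$ and $u = 0$ so that Lemma \ref{lem:popspkde}, which requires $\beta > 1$ and $\alpha' > 0$, can legitimately be invoked.
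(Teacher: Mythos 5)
Your proof is correct and is essentially the paper's own argument, which appears embedded in the appendix proof of Theorem \ref{spkdeconsist}: exhibit the slicing constant $\alpha' = \frac{\varepsilon u}{1-\varepsilon}$, verify $\max\l\{\beta\fobs - \alpha', 0\r\} = \ftar$ by an a.e.\ case analysis on and off $\supp(\ftar)$, and invoke the uniqueness clause of Lemma \ref{lem:popspkde}. Your explicit treatment of the edge cases $\varepsilon = 0$ and $u = 0$ (needed so that $\beta > 1$ and $\alpha' > 0$ before citing the lemma) is a small point of rigor the paper leaves implicit, but it does not change the route.
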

The proof of this proposition is an intermediate step for the proof for Theorem \ref{spkdeconsist}. For any two subsets of $\sV,\sV' \subset \sD \times \sD$, $R_\varepsilon$ decontaminates $\sV$ and $\sV'$ iff $R_\varepsilon$ decontaminates $\sV \bigcup \sV'$. Because of this, every decontaminating transform has a maximal set which it can decontaminate.  Assumption A is both sufficient and necessary for decontamination by $\rspkde$, i.e. the set $\sV_A$ is maximal.

\begin{prop}
	Let $\l\{(q,q')\r\} \in \sD \times \sD$ and $(q,q') \notin \sV_A$. $\rspkde$ cannot decontaminate $\l\{(q,q')\r\}$.
\end{prop}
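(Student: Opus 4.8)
The plan is to prove the contrapositive: if $\rspkde$ decontaminates $\l\{(q,q')\r\}$, then $(q,q') \in \sV_A$. So I would assume $\rspkde\l((1-\varepsilon)q + \varepsilon q'\r) = q$ for every $\varepsilon \in [0,1)$ and then fix a single $\varepsilon \in (0,1)$; a key observation is that one value of $\varepsilon$ already suffices to force Assumption A, since Assumption A itself makes no reference to $\varepsilon$. Write $\beta = \frac{1}{1-\varepsilon} > 1$ and $c = \frac{\varepsilon}{1-\varepsilon} > 0$, so that $\beta\l((1-\varepsilon)q+\varepsilon q'\r) = q + c q'$, and let $\alpha$ be the slicing constant from Lemma \ref{lem:popspkde} applied to $\fobs = (1-\varepsilon)q + \varepsilon q'$. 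The decontamination identity then reads $\max\l\{q + c q' - \alpha, 0\r\} = q$ pointwise almost everywhere.

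First I would analyze this identity on $\supp(q)$. Where $q(x) > 0$ the right side is strictly positive, which forces the max to be attained by its first argument, i.e. $q(x) + c q'(x) - \alpha = q(x)$; cancelling $q(x)$ gives $q'(x) = \alpha/c$. Setting $u \triangleq \alpha/c$, this shows $q' = u$ almost everywhere on $\supp(q)$. Next I would analyze the identity off $\supp(q)$: where $q = 0$ it becomes $\max\l\{c q' - \alpha, 0\r\} = 0$, which holds iff $c q'(x) - \alpha \le 0$, i.e. $q'(x) \le \alpha/c = u$. Hence $q' \le u$ almost everywhere outside $\supp(q)$. These two conclusions are exactly Assumption A for the pair $(q,q')$ with the constant $u$, so $(q,q') \in \sV_A$, which contradicts the hypothesis and establishes the proposition.

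Each step is individually short, so the main thing to get right is the measure-theoretic bookkeeping around $\supp(q)$. I would adopt the convention $\supp(q) = \l\{x : q(x) > 0\r\}$ up to a Lebesgue-null set, which is what makes the pointwise case split clean and guarantees $q > 0$ almost everywhere on the support and $q = 0$ almost everywhere off it; without this the step ``$q'(x) = \alpha/c$ on $\supp(q)$'' could fail on a null set, which is harmless, but one should note it explicitly. The only other subtlety is confirming $\alpha > 0$ so that $u = \alpha/c$ is a genuine positive level, and this is supplied directly by Lemma \ref{lem:popspkde} since $\beta > 1$. With $\alpha > 0$ in hand, the two almost-everywhere conditions assemble into Assumption A verbatim.
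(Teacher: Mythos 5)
Your proof is correct, but it takes a genuinely different route from the paper's. You prove the contrapositive: assuming $\rspkde\left((1-\varepsilon)q+\varepsilon q'\right)=q$ at a single $\varepsilon\in(0,1)$, you read Assumption A directly off the pointwise identity $\max\{q+cq'-\alpha,0\}=q$, obtaining $q'=\alpha/c$ a.e.\ where $q>0$ and $q'\le\alpha/c$ a.e.\ where $q=0$, with $\alpha>0$ supplied by Lemma \ref{lem:popspkde}. The paper instead argues directly: it negates Assumption A into an auxiliary ``Assumption B'' (existence of positive-measure sets $S\subset\supp(q)$ and $T$ on which $q'$ is larger on $T$ than on $S$), proves the equivalence of not-A and B, and then shows by an integral estimate that the slicing level must satisfy $\alpha<\esssup_x \frac{\varepsilon}{1-\varepsilon}q'(x)$, so that on the positive-measure set where $\frac{\varepsilon}{1-\varepsilon}q'>\alpha$ the output strictly exceeds $q$. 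Both arguments in fact establish failure at every $\varepsilon\in(0,1)$, not just one. Your version is shorter, avoids the not-A/B bookkeeping and the paper's separate $\esssup=\infty$ case, makes transparent that the level $u$ in Assumption A is forced to equal $\alpha(1-\varepsilon)/\varepsilon$, and, combined with Proposition \ref{deconprop}, exhibits $\sV_A$ as exactly the class decontaminated by $\rspkde$; the paper's version has the virtue of pointing at the concrete region where reconstruction fails. Finally, your explicit convention $\supp(q)=\{x:q(x)>0\}$ (up to null sets) is not pedantry but necessary: under the topological (closed) support the proposition is actually false --- take $q$ vanishing on a positive-measure subset $Z$ of its closed support, $q'=u$ on $\{q>0\}$ and $q'<u$ on $Z$; slicing at $\alpha=cu$ still recovers $q$ exactly --- and the paper's own proof implicitly relies on the same convention when it needs $\ftar>0$ on $S$ to make its inequality strict.
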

The proof of this proposition is in the appendix.
\newpage
\subsection{Other Possible Contamination Models}
\label{othermodels}
The model described previously is just one of many possible models. An obvious approach to robust kernel density estimation is to use an anomaly detection algorithm and construct the KDE using only non-anomalous samples. We will investigate this model under a couple of anomaly detection schemes and describe their properties.

One of the most common methods for anomaly detection is the level set method. For a probability measure $\mu$ this method attempts to find the set $S$ with smallest Lebesgue measure such that $\mu(S)$ is above some threshold, $t$, and declares samples outside of that set as being anomalous. For a density $f$ this is equivalent to finding $\lambda$ such that $\int_{\left\{ x|f(x)\geq\lambda \right\}} f(y) dy = t$ and declaring samples were $f(X) <\lambda$ as being anomalous. Let $X_1,\dots,X_n$ be iid samples from $\fobs$. Using the level set method for a robust KDE, we would construct a density $\wfobs$ which is an estimate of $\fobs$. Next we would select some threshold $\lambda>0$ and declare a sample, $X_i$, as being anomalous if $\wfobs(X_i) <\lambda$. Finally we would construct a KDE using the non-anomalous samples. Let $\chi_{\left\{ \cdot \right\}}$ be the indicator function.  Applying this method in the infinite sample situation, i.e. $\wfobs = \fobs$, would cause our non-anomalous samples to come from the density $ p(x)= \frac{\fobs(x)\chi_{\l\{\fobs(x)>\lambda\r\}}}{\tau}$ where $\tau = \int\chi_{\l\{f(y)>\lambda\r\}}f(y)dy$. See Figure \ref{rejectkde}. Perfect recovery of $\ftar$ using this method requires $\varepsilon \fcon(x) \le \ftar(x)\left( 1-\varepsilon \right)$ for all $x$ and that $\fcon$ and $\ftar$ have disjoint supports. The first assumption means that this density estimator can only recover $\ftar$ if it has a drop off on the boundary of its support, whereas Assumption A only requires that $\ftar$ have finite support. See the last diagram in Figure \ref{rejectkde}.  Although these assumptions may be reasonable in certain situations, we find them less palatable than Assumption A. We also evaluate this approach experimentally later and find that it performs poorly.

\begin{wrapfigure}{}{0.5\textwidth}
	\vspace{-2em}
	\begin{center}
		\fbox{\includegraphics[width=0.48\textwidth]{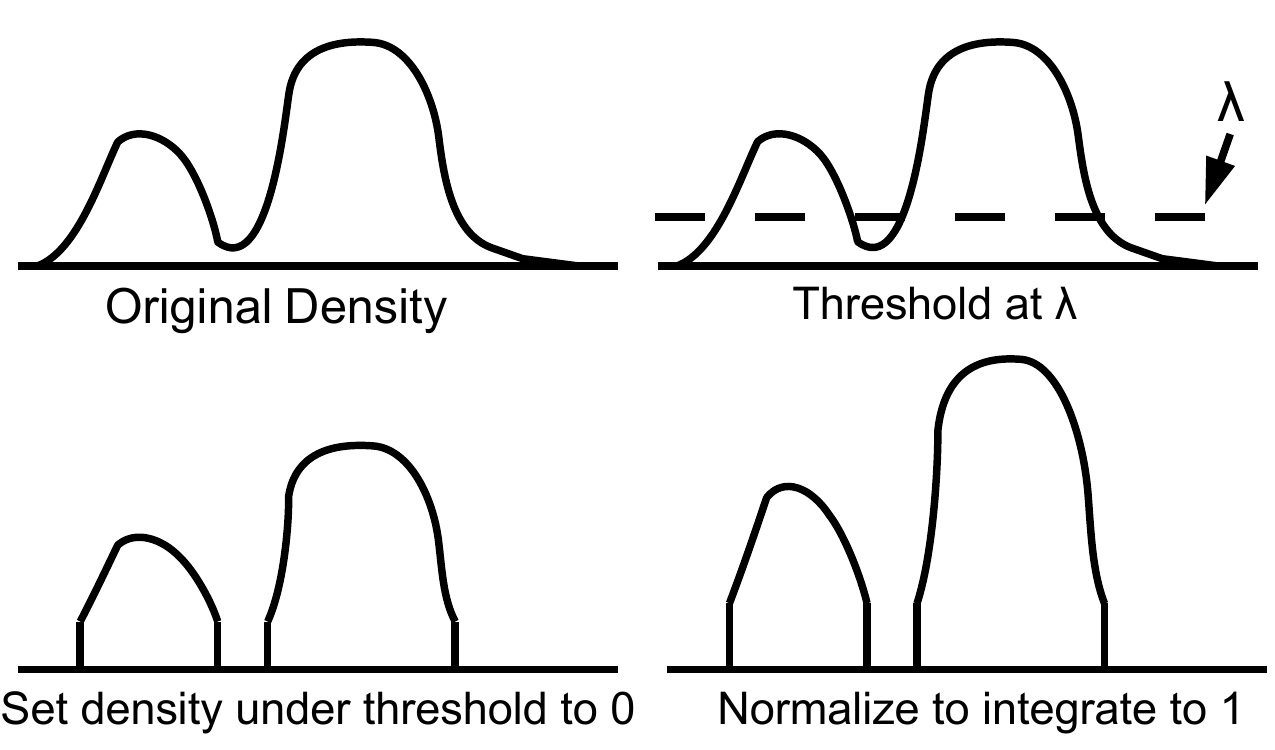}}
	\end{center}
	\caption{Infinite sample version of the level set rejection KDE}
	\label{rejectkde}
	\vspace{-2em}
\end{wrapfigure}

Another approach based on anomaly detection would be to find the connected components of $\fobs$ and declare those that are, in some sense, small as being anomalous. A ``small'' connected component may be one that integrates to a small value, or which has a small mode. Unfortunately this approach also assumes that $\ftar$ and $\fcon$ have disjoint supports. There are also computational issues with this anomaly detection scheme; finding connected components, finding modes, and numerical integration are computationally difficult.

To some degree, $\rspkde$ actually achieves the objectives of the previous two robust KDEs. For the first model, the $\rspkde$ does indeed set those regions of the pdf that are below some threshold to zero. For the second, if the magnitude of the level at which we choose to slice off the bottom of the contaminated density is larger than the mode of the anomalous component then the anomalous component will be eliminated. 

\section{Scaled Projection Kernel Density Estimator}
\label{spkde}

Here we consider approximating $\rspkde$ in a finite sample situation. Let $f\in L^2\l(\rn^d\r)$ be a pdf and $X_1,\dots, X_n$ be iid samples from $f$. Let $k_\sigma\l(x,x'\r)$ be a radial smoothing kernel with bandwidth $\sigma$ such that $k_\sigma\l(x,x'\r) = \sigma^{-d} q\l(\l\|x-x'\r\|_2/\sigma\r)$, where $q\l(\l\|\cdot\r\|_2\r)\in L^2\l(\rn^d\r)$ and is a pdf. The classic kernel density estimator is:
\begin{eqnarray*}
	\bar{f}_\sigma^n:= \frac{1}{n}\sum_1^n k_\sigma\l(\cdot, X_i\r).
\end{eqnarray*}

In practice $\varepsilon$ is usually not known and Assumption A is violated. Because of this we will scale our density by $\beta>1$ rather than $\frac{1}{1-\varepsilon}$. For a density $f$ define
\begin{eqnarray*}
	Q_\beta(f) \triangleq  \max \l\{\beta f\l(\cdot\r)-\alpha, 0\r\},
\end{eqnarray*}
where $\alpha = \alpha(\beta)$ is set such that the RHS is a pdf.
$\beta$ can be used to tune robustness with larger $\beta$ corresponding to more robustness (setting $\beta$ to 1 in all the following transforms simply yields the KDE). Given a KDE we would ideally like to apply $Q_\beta$ directly and search over $\alpha$ until  $ \max \l\{\beta \bar{f}_\sigma^n\l(\cdot\r)-\alpha, 0\r\}$ integrates to $1$. Such an estimate requires multidimensional numerical integration and is not computationally tractable. The SPKDE is an alternative approach that always yields a density and manifests the transformed density in its asymptotic limit.

We now introduce the construction of the SPKDE.   Let $\dsign$ be the convex hull of
$k_\sigma\l(\cdot, X_1\r),\dots, k_\sigma\l(\cdot,X_n\r)$ (the space of weighted kernel density estimators). The SPKDE is defined as 
\begin{eqnarray*}
	\bspkde := \arg \min_{g\in \dsign} \l\|\beta\bar{f}_\sigma^n - g\r\|_{L^2},
\end{eqnarray*}
which is guaranteed to have a unique minimizer since $\dsign$ is closed and convex and we are projecting in a Hilbert space (\cite{bauschke2011convex} Theorem 3.14). If we represent $\bspkde$ in the form $$\bspkde= \sum_1^n a_i k_\sigma\l(\cdot,X_i\r),$$ then the minimization problem is a quadratic program over the vector $ a = \l[a_1,\dots, a_n\r]^T$, with $a$ restricted to the probabilistic simplex, $\Delta^n$. Let $G$ be the Gram matrix of $k_\sigma\l(\cdot, X_1\r),\dots, k_\sigma\l(\cdot,X_n\r)$, that is 
\begin{eqnarray*}
	G_{ij} 
	&=& \l<k_\sigma\l(\cdot,X_i\r) , k_\sigma\l(\cdot,X_j\r)\r>_{L^2}\\
	&=&\int k_\sigma\l(x,X_i\r) k_\sigma\l(x,X_j\r)dx.
\end{eqnarray*}
Let $\mathbf{1}$ be the ones vector and $b= G\mathbf{1}\frac{\beta}{n}$, then the quadratic program is 
\begin{eqnarray*}
	\min_{a \in \Delta^n} a^T G a -2b^T a.
\end{eqnarray*}
Since $G$ is a Gram matrix, and therefore positive-semidefinite, this quadratic program is convex. Furthermore, the integral defining $G_{ij}$ can be computed in closed form for many kernels of interest. For example for the Gaussian kernel
\begin{eqnarray*}
	k_\sigma\left( x,x' \right) = \left( 2\pi\sigma^2 \right)^{-\frac{d}{2}} \exp\left( \frac{-\l\|x-x'\r\|^2}{2\sigma^2} \right) \implies G_{ij} = k_{\sqrt{2}\sigma}(X_i,X_j),
\end{eqnarray*}
and for the Cauchy kernel \cite{berry1996bayesian}
\begin{eqnarray*}
	k_\sigma\left( x,x' \right) = \frac{\Gamma\left( \frac{1+d}{2} \right)}{\pi^{(d+1)/2}\cdot\sigma^d} \left( 1+\frac{\l\|x-x'\r\|^2}{\sigma^2} \right)^{-\frac{1+d}{2}} \implies G_{ij} = k_{2\sigma}(X_i,X_j).
\end{eqnarray*}

We now present some results on the asymptotic behavior of the SPKDE. Let $\mathcal{D}$ be the set of all pdfs in $L^2\l(\rn^d\r)$. The infinite sample version of the SPKDE is
\begin{eqnarray*}
	\binfspkde = \arg \min_{h\in\mathcal{D}}\l\|\beta f - h \r\|_{L^2}^2.
\end{eqnarray*}
It is worth noting that projection operators in Hilbert space, like the one above, are known to be well defined if the convex set we are projecting onto is closed and convex. $\sD$ is not closed in $L^2\l(\rn^d\r)$, but this turns out not to be an issue because of the form of $\beta f$. For details see the proof of Lemma \ref{lem:infspkde} in the appendix.
\begin{lem}
	\label{lem:infspkde}
	$\binfspkde = \max \l\{\beta f\l(\cdot\r)-\alpha, 0\r\}$ where $\alpha$ is set such that $\max \l\{\beta f\l(\cdot\r)-\alpha, 0\r\}$ is a pdf.
\end{lem}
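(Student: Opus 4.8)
The plan is to sidestep the closedness issue entirely by exhibiting an explicit candidate for the minimizer and verifying the variational inequality that characterizes Hilbert-space projections. First I would fix the candidate $p \triangleq \max\l\{\beta f(\cdot) - \alpha', 0\r\}$, where $\alpha' > 0$ is the value supplied by Lemma \ref{lem:popspkde} so that $\l\|p\r\|_{L^1} = 1$. Since $0 \le p \le \beta f$ pointwise and $\beta f \in L^2\l(\rn^d\r)$, the candidate also lies in $L^2$, so $p \in \mathcal{D}$; in particular, if the minimum is attained at $p$, it is attained inside $\mathcal{D}$ and no appeal to closedness of $\mathcal{D}$ is needed.

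The key is the elementary identity, valid for any $h$ in a Hilbert space:
\begin{eqnarray*}
\l\|\beta f - h\r\|_{L^2}^2 = \l\|\beta f - p\r\|_{L^2}^2 + 2\l<\beta f - p, p - h\r>_{L^2} + \l\|p - h\r\|_{L^2}^2 .
\end{eqnarray*}
Thus it suffices to show $\l<\beta f - p, h - p\r>_{L^2} \le 0$ for every $h \in \mathcal{D}$; this immediately gives $\l\|\beta f - h\r\|_{L^2}^2 \ge \l\|\beta f - p\r\|_{L^2}^2$ with equality only when $h = p$, establishing simultaneously that $p$ is a minimizer and that it is unique.

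To verify the variational inequality I would exploit the pointwise structure of the residual $\beta f - p$. On $\l\{\beta f > \alpha'\r\}$ we have $p = \beta f - \alpha'$, so $\beta f - p = \alpha'$; on $\l\{\beta f \le \alpha'\r\}$ we have $p = 0$, so $\beta f - p = \beta f \in [0,\alpha']$. Hence $0 \le \beta f - p \le \alpha'$ everywhere, with $\beta f - p = \alpha'$ on $\supp(p)$. Consequently $\l<\beta f - p, p\r>_{L^2} = \alpha' \int p = \alpha'$ because $p$ is a pdf, while for any $h \in \mathcal{D}$ the bounds $0 \le \beta f - p \le \alpha'$ together with $h \ge 0$ give $\l<\beta f - p, h\r>_{L^2} \le \alpha' \int h = \alpha'$. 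Subtracting yields $\l<\beta f - p, h - p\r>_{L^2} \le \alpha' - \alpha' = 0$, as required.

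The main obstacle is exactly the point flagged before the statement: $\mathcal{D}$ is not closed in $L^2\l(\rn^d\r)$, so the Hilbert projection theorem does not apply off the shelf and existence of a minimizer is not automatic. The proposed argument dissolves this difficulty by never projecting abstractly — it produces the explicit water-filling candidate $p$ (well-defined thanks to Lemma \ref{lem:popspkde}) and checks the variational inequality directly, which proves existence, optimality, and uniqueness at once. The only facts used about $h$ are $h \ge 0$ and $\int h = 1$, and these are precisely what let the pointwise sandwich $0 \le \beta f - p \le \alpha'$ control the cross term.
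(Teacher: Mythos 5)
Your proof is correct, and it reaches the lemma by a more self-contained route than the paper, although the heart of the computation is identical. The paper handles the non-closedness of $\mathcal{D}$ by projecting onto the closure $\bar{\mathcal{D}}$: it invokes the Hilbert projection theorem on that closed convex set for existence and uniqueness, cites the variational characterization of the projection (Theorem 3.14 of \cite{bauschke2011convex}), and uses density of $\mathcal{D}$ in $\bar{\mathcal{D}}$ together with continuity in $c$ to reduce the check of $\langle c - p, \beta f - p\rangle \le 0$ to $c \in \mathcal{D}$, where it runs exactly the pointwise estimate you use ($0 \le \beta f - p \le \alpha'$ everywhere, with value $\alpha'$ wherever $p>0$, so the cross term is at most $\alpha' - \alpha' = 0$). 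You dispense with that machinery: expanding $\l\|\beta f - h\r\|_{L^2}^2$ about the explicit candidate $p$ converts the same inner-product bound into $\l\|\beta f - h\r\|_{L^2}^2 \ge \l\|\beta f - p\r\|_{L^2}^2 + \l\|p - h\r\|_{L^2}^2$, which proves in one stroke that the minimum over $\mathcal{D}$ itself is attained at $p$ and that the minimizer is unique --- no projection theorem, no closure, no density argument, and the lemma is proved exactly as stated (an argmin over $\mathcal{D}$, not $\bar{\mathcal{D}}$). Two remarks on the trade-off. First, the paper's framing is reused downstream: the proof of Theorem 1 works with $P_{\bar{\mathcal{D}}}\beta f$ and explicitly leans on the Lemma 1--2 proof for the fact that projecting onto $\bar{\mathcal{D}}$ rather than $\mathcal{D}$ is harmless; your version supplies this too, but only after the one-line observation that $h \mapsto \langle \beta f - p, h - p\rangle$ is continuous, so your inequality extends from $\mathcal{D}$ to $\bar{\mathcal{D}}$. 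Second, the paper proves Lemmas 1 and 2 jointly and extracts the uniqueness of $\alpha'$ from the uniqueness of the minimizer; you consume only the existence half of Lemma 1, so your argument is non-circular as written, and your uniqueness of $p$ in fact recovers uniqueness of $\alpha'$ as a byproduct (two admissible cutoffs $\alpha' < \alpha''$ would produce two functions differing by $\alpha'' - \alpha'$ on the positive-measure set $\{\beta f > \alpha''\}$, both of which your identity would certify as the unique minimizer).
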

Given the same rate on bandwidth necessary for consistency of the traditional KDE, the SPKDE converges to its infinite sample version in its asymptotic limit.
\begin{thm} \label{thm:spkdeconsist}
	Let $f \in L^2\l(\rn^d\r)$. If $n \to \infty$ and $\sigma \to 0$ with $n\sigma^d \to \infty$ then $\l\|\bspkde - \binfspkde\r\|_{L^2} \cip 0$.
\end{thm}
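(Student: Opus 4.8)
The plan is to bound $\l\|\bspkde - \binfspkde\r\|_{L^2}$ by inserting the projection of the \emph{scaled} KDE onto the full density set $\sD$ as an intermediate, and then to control the two resulting pieces separately: one by $L^2$-consistency of the ordinary KDE, the other by a geometric comparison of projections onto the nested convex sets $\dsign \subseteq \sD$. Throughout, write $P_C$ for the $L^2$ metric projection onto a convex set $C$ and set $\tispkde := \max\l\{\beta\bar{f}_\sigma^n(\cdot) - \alpha_n,\, 0\r\}$, where $\alpha_n$ makes $\tispkde$ a pdf; by Lemma \ref{lem:infspkde} applied with $\bar{f}_\sigma^n$ in place of $f$, this $\tispkde$ is exactly $P_\sD(\beta\bar{f}_\sigma^n)$. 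Note that $\dsign \subseteq \sD$ because every convex combination of the pdfs $k_\sigma(\cdot,X_i)$ is again a pdf in $L^2$, and that $\bspkde = P_{\dsign}(\beta\bar{f}_\sigma^n)$ while $\binfspkde = P_\sD(\beta f)$. The triangle inequality then reduces the theorem to showing $\l\|\tispkde - \binfspkde\r\|_{L^2} \cip 0$ and $\l\|\bspkde - \tispkde\r\|_{L^2} \cip 0$.

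First I would dispatch $\l\|\tispkde - \binfspkde\r\|$. Since $\tispkde = P_\sD(\beta\bar{f}_\sigma^n)$ and $\binfspkde = P_\sD(\beta f)$, and since metric projection onto a convex set is nonexpansive wherever it exists (Lemma \ref{lem:infspkde} guarantees existence here despite $\sD$ not being closed, and the nonexpansiveness follows by adding the two variational inequalities), we get $\l\|\tispkde - \binfspkde\r\| \le \beta\l\|\bar{f}_\sigma^n - f\r\|_{L^2}$. The right side tends to $0$ in probability: this is the classical $L^2$ (MISE) consistency of the KDE for $f \in L^2\l(\rn^d\r)$, whose bias term vanishes as $\sigma \to 0$ and whose variance term is $O\l(1/(n\sigma^d)\r) \to 0$.

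For the piece $\l\|\bspkde - \tispkde\r\|$ I would exploit $\dsign \subseteq \sD$. Writing $x = \beta\bar{f}_\sigma^n$, the variational characterization of $\tispkde = P_\sD(x)$ tested against the competitor $\bspkde \in \dsign \subseteq \sD$ gives the Pythagorean-type bound $\l\|\bspkde - \tispkde\r\|^2 \le \l\|x - \bspkde\r\|^2 - \l\|x - \tispkde\r\|^2 = d(x,\dsign)^2 - d(x,\sD)^2$. Factoring this difference and using both $d(x,\sD) \le d(x,\dsign)$ and $d(x,\dsign) \le d(x,\sD) + d(\tispkde,\dsign)$ (triangle inequality routed through $\tispkde \in \sD$), the bound becomes $\l(d(x,\dsign) + d(x,\sD)\r)\, d(\tispkde,\dsign)$. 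The leading factor is bounded in probability, since $\bar{f}_\sigma^n \in \dsign$ forces $d(x,\dsign) \le (\beta-1)\l\|\bar{f}_\sigma^n\r\|_{L^2}$ and $\l\|\bar{f}_\sigma^n\r\|_{L^2} \to \l\|f\r\|_{L^2}$. Because $d(\tispkde,\dsign) \le \l\|\tispkde - \binfspkde\r\| + d(\binfspkde,\dsign)$ with the first summand already controlled, everything collapses to a single approximation claim about the \emph{fixed} target $\binfspkde$.

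The main obstacle is thus to prove $d(\binfspkde,\dsign) \cip 0$, i.e.\ that $\binfspkde$ admits arbitrarily good $L^2$ approximation by nonnegatively weighted KDEs on the sample. I would build an explicit competitor by importance weighting: put $w := \binfspkde/f$ (set to $0$ off $\supp(f)$), which is bounded by $\beta$ because $\binfspkde = \max\{\beta f - \alpha,0\} \le \beta f$ and supported within $\supp(f)$, and choose $a_i \propto w(X_i)$, so that $\sum_i a_i k_\sigma(\cdot,X_i) = \tfrac{n}{Z}\cdot\tfrac1n\sum_i w(X_i)k_\sigma(\cdot,X_i)$ with $Z = \sum_j w(X_j)$. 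Since $X_i \sim f$ and $w$ is bounded, the law of large numbers gives $Z/n \to \int\binfspkde = 1$; the mean of the weighted KDE is the mollification $K_\sigma * \binfspkde$, which converges to $\binfspkde$ in $L^2$ by the approximate-identity property; and its $L^2$ fluctuation about that mean has second moment $O\l(\tfrac{\int w^2 f}{n\sigma^d}\r) = O\l(\tfrac{1}{n\sigma^d}\r) \to 0$, mirroring the variance step of ordinary KDE consistency (here $w \le \beta$ keeps $\int w^2 f \le \beta$ finite). Combining these yields $\sum_i a_i k_\sigma(\cdot,X_i) \cip \binfspkde$ in $L^2$, hence $d(\binfspkde,\dsign) \cip 0$. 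The delicate points to verify carefully are the well-definedness and boundedness of $w$ (which rests on $\supp(\binfspkde) \subseteq \supp(f)$), the behavior of the weighted variance estimate in the joint limit $\sigma \to 0$, $n\sigma^d \to \infty$, and the approximate-identity convergence $\l\|K_\sigma * \binfspkde - \binfspkde\r\|_{L^2} \to 0$ for the $L^2$ density $\binfspkde$.
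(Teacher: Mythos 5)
Your proof is correct, but it takes a genuinely different route from the paper's in two respects, one cosmetic and one substantive. The paper decomposes through $P_{\dsign}\beta f$ (projecting the \emph{population} target onto the empirical hull), bounds the first piece $\l\|P_{\dsign}\beta\bar{f}_\sigma^n - P_{\dsign}\beta f\r\|_{L^2}$ by nonexpansiveness plus KDE consistency just as you do, and then shows $\l\|P_{\dsign}\beta f - \beta f\r\|_{L^2} \cip \l\|P_{\bar{\sD}}\beta f - \beta f\r\|_{L^2}$, converting convergence of distances into convergence of projections via strong convexity of $\l\|\beta f - \cdot\r\|_{L^2}^2$; you instead route through $P_{\sD}\l(\beta\bar{f}_\sigma^n\r)$ and use the Pythagorean bound from the variational inequality --- the same underlying fact as strong convexity, so this difference is mostly notational, though your factored estimate $\l(d(x,\dsign)+d(x,\sD)\r)d\l(P_\sD(x),\dsign\r)$ with $d(x,\dsign)\le(\beta-1)\l\|\bar{f}_\sigma^n\r\|_{L^2}$ is a clean quantitative packaging. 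The substantive divergence is in the crux shared by both arguments: showing $\dsign$ asymptotically contains good $L^2$ approximants of $\binfspkde$. The paper exploits $\binfspkde \le \beta f$ via \emph{rejection sampling}, thinning $X_1,\dots,X_n$ to an iid subsample from $\binfspkde$ of random size $m_n$ (uniform weights $1/m_n$), which forces an auxiliary randomization and an SLLN argument to guarantee $m_n\sigma^d \to \infty$ before invoking KDE consistency for the subsample. You exploit the identical structural fact through \emph{importance weighting}, $a_i \propto w(X_i)$ with $w = \binfspkde/f \le \beta$, which is deterministic given the sample: the mean of the weighted sum is exactly $K_\sigma * \binfspkde \to \binfspkde$ by the approximate-identity property, and the fluctuation is killed by the second-moment bound $\int w^2 f \le \beta$ with $\l\|k_\sigma(\cdot,X)\r\|_{L^2}^2 = Q^2\sigma^{-d}$ and Chebyshev --- more elementary than the Hilbert-space Hoeffding inequality the paper invokes, and free of the random-subsample-size technicality. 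The bookkeeping you flagged all checks out: $w$ is well defined and bounded since $\alpha > 0$ (Lemma 1) gives $\binfspkde(x) > 0 \Rightarrow f(x) > \alpha/\beta > 0$; $Z/n \to \int\binfspkde = 1$ a.s.\ (default to uniform weights on the asymptotically negligible event $Z=0$); and $\binfspkde \in L^2$ since $0 \le \binfspkde \le \beta f$. So the proof goes through as written.
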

Because $\bspkde$ is a sequence of pdfs and $\binfspkde\in L^2\left( \rn^d \right)$, it is possible to show $L^2$ convergence implies $L^1$ convergence.
\begin{cor}
	Given the conditions in the previous theorem statement, $\l\|\bspkde - \binfspkde\r\|_{L^1} \cip 0$.
\end{cor}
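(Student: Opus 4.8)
My plan is to exploit the fact, emphasized in the remark preceding the statement, that each member of the sequence $\bspkde$ and the limit $\binfspkde$ are probability density functions, together with $\binfspkde\in L^2\l(\rn^d\r)$. The obstruction to deducing $L^1$ convergence from $L^2$ convergence is that Lebesgue measure on $\rn^d$ is infinite, so mass can leak to infinity without being detected by the $L^2$ norm; what is really needed is a tail bound on $\bspkde$ that is uniform in $n$. I would establish a single deterministic inequality that converts $L^2$ closeness into $L^1$ closeness for any pair of densities, and then push the $L^2$ convergence in probability through it.

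Concretely, for $R>0$ let $B_R = \l\{x\in\rn^d : \l\|x\r\|_2 \le R\r\}$ and write $V_R$ for its finite Lebesgue measure. For any two densities $g,h\in L^2\l(\rn^d\r)$ I would split $\l\|g-h\r\|_{L^1}$ over $B_R$ and its complement. On $B_R$, Cauchy--Schwarz gives $\int_{B_R}\l|g-h\r| \le \sqrt{V_R}\,\l\|g-h\r\|_{L^2}$. On $B_R^c$, nonnegativity gives $\int_{B_R^c}\l|g-h\r| \le \int_{B_R^c} g + \int_{B_R^c} h$. The step that resolves the tail obstacle is to control $\int_{B_R^c} g$ using that both functions integrate to $1$: from $\int_{B_R} g \ge \int_{B_R} h - \sqrt{V_R}\,\l\|g-h\r\|_{L^2}$ one obtains $\int_{B_R^c} g = 1 - \int_{B_R} g \le \int_{B_R^c} h + \sqrt{V_R}\,\l\|g-h\r\|_{L^2}$. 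Combining these would yield, for every $R>0$,
$$\l\|g-h\r\|_{L^1} \le 2\sqrt{V_R}\,\l\|g-h\r\|_{L^2} + 2\int_{B_R^c} h.$$

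I would then apply this with $g=\bspkde$ and $h=\binfspkde$, both of which are densities (the former as an element of $\dsign$, the latter by Lemma \ref{lem:infspkde}). Fixing $\eta>0$, since $\binfspkde$ is a fixed density I can choose $R$ large enough that $2\int_{B_R^c}\binfspkde < \eta/2$. With this $R$ held fixed, the inequality gives the event inclusion $\l\{\l\|\bspkde-\binfspkde\r\|_{L^1} > \eta\r\} \subseteq \l\{\l\|\bspkde-\binfspkde\r\|_{L^2} > \eta/(4\sqrt{V_R})\r\}$, and Theorem \ref{thm:spkdeconsist} makes the probability of the latter event tend to $0$ as $n\to\infty$. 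Hence $\l\|\bspkde-\binfspkde\r\|_{L^1} \cip 0$.

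The only real subtlety is the tail term, and the observation that both functions are pdfs is exactly what makes the mass-conservation bound on $\int_{B_R^c}\bspkde$ work. Because the core is a deterministic inequality followed by a one-line probabilistic conclusion, this route avoids any appeal to almost-sure subsequences or a Scheff\'e-type argument, which would otherwise only deliver $L^1$ convergence along realization-dependent subsequences.
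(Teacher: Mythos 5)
Your proposal is correct and takes essentially the same route as the paper's proof: bound the $L^1$ difference on a finite-measure set via Cauchy--Schwarz/H\"older, choose that set so the limit density $\binfspkde$ has tail mass less than a fixed tolerance, and use the fact that both $\bspkde$ and $\binfspkde$ are pdfs (mass conservation) to control the tail of $\bspkde$. The only difference is organizational --- you distill the argument into a single deterministic inequality, $\l\|g-h\r\|_{L^1} \le 2\sqrt{V_R}\,\l\|g-h\r\|_{L^2} + 2\int_{B_R^c} h$, applied once at the end, whereas the paper interleaves the identical steps with ``probability going to one'' bookkeeping; your packaging is arguably cleaner and yields an explicit quantitative bound.
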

To summarize, the SPKDE converges to a transformed version of $f$. In the next section we will show that under Assumption A and with $\beta = \frac{1}{1-\varepsilon}$, the SPKDE converges to $\ftar$.
\subsection{SPKDE Decontamination}

Let $\ftar \in L^2\l(\rn^d\r)$ be a pdf having support with finite Lebesgue measure and let $\ftar$ and $\fcon$ satisfy Assumption A. Let $X_1,X_2,\dots,X_n$ be iid samples from $\fobs=\l(1-\varepsilon\r) \ftar + \varepsilon \fcon$ with $\varepsilon \in [0,1)$. Finally let $\bspkde$ be the SPKDE constructed from $X_1,\dots,X_n$, having bandwidth $\sigma$ and robustness parameter $\beta$. We have

\begin{thm}
	Let $\beta = \frac{1}{1-\varepsilon}$. If $n\to \infty$ and $\sigma \to 0$ with $n\sigma^d \to \infty$ then $\l\|\bspkde-\ftar\r\|_{L_1} \cip 0$.
	\label{spkdeconsist}
\end{thm}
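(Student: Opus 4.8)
The plan is to recognize that, under the stated hypotheses, the infinite-sample SPKDE limit $\binfspkde$ is exactly $\ftar$, so that the theorem follows immediately from the already-established consistency result Theorem~\ref{thm:spkdeconsist} together with its $L^1$ corollary. The entire argument is a chain of identifications: the finite-sample $\bspkde$ converges to $\binfspkde$, the limit $\binfspkde$ coincides with $\rspkde(\fobs)$, and $\rspkde(\fobs)$ decontaminates to $\ftar$.

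First I would check that the hypotheses of Theorem~\ref{thm:spkdeconsist} apply with $f = \fobs$. Since $\ftar \in L^2\l(\rn^d\r)$ and, by Assumption~A, $\fcon \le u$ almost everywhere, $\fcon$ is a bounded pdf and hence $\fcon \in L^1 \cap L^\infty \subset L^2\l(\rn^d\r)$; therefore $\fobs = (1-\varepsilon)\ftar + \varepsilon\fcon \in L^2\l(\rn^d\r)$. The estimator $\bspkde$ is built from $X_1,\dots,X_n \sim \fobs$ with scaling $\beta$, so Theorem~\ref{thm:spkdeconsist} applied with $f = \fobs$ gives $\l\|\bspkde - \binfspkde\r\|_{L^2} \cip 0$, and its corollary upgrades this to $\l\|\bspkde - \binfspkde\r\|_{L^1} \cip 0$.

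Next I would identify $\binfspkde$. By Lemma~\ref{lem:infspkde} with $f = \fobs$, we have $\binfspkde = \max\{\beta\fobs(\cdot) - \alpha, 0\}$ where $\alpha$ is the constant making the right-hand side a pdf; by Lemma~\ref{lem:popspkde} this $\alpha$ is unique. Substituting $\beta = \frac{1}{1-\varepsilon}$, this expression is precisely $\rspkde(\fobs) = \max\{\frac{1}{1-\varepsilon}\fobs(\cdot) - \alpha, 0\}$ as defined just above Proposition~\ref{deconprop}, the two normalizing constants agreeing by the uniqueness from Lemma~\ref{lem:popspkde}. Since $(\ftar,\fcon) \in \sV_A$ by Assumption~A, Proposition~\ref{deconprop} yields $\rspkde(\fobs) = \ftar$, hence $\binfspkde = \ftar$. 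Combining with the previous paragraph gives $\l\|\bspkde - \ftar\r\|_{L^1} \cip 0$.

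The heavy lifting is entirely in the prerequisite results -- the finite-to-infinite-sample consistency of Theorem~\ref{thm:spkdeconsist} and the explicit form of the projection limit in Lemma~\ref{lem:infspkde} -- so I expect no substantial obstacle in this final step. The only points requiring care are the bookkeeping that $\fobs \in L^2\l(\rn^d\r)$ (so the consistency machinery is applicable) and that the normalizing constant in Lemma~\ref{lem:infspkde} for $\fobs$ is the same one appearing in the definition of $\rspkde$, which is exactly where the uniqueness in Lemma~\ref{lem:popspkde} is used. The boundary case $\varepsilon = 0$ (so $\beta = 1$) is benign: there $\fobs = \ftar$, the normalizing constant is $\alpha = 0$, and $\binfspkde = \ftar$ directly.
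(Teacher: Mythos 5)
Your reduction is correct and is structurally the same as the paper's proof: the paper also bounds $\l\|\bspkde-\ftar\r\|_{L^1} \le \l\|\bspkde - \binfspkde\r\|_{L^1} + \l\|\binfspkde-\ftar\r\|_{L^1}$, kills the first term by Corollary 1 (Theorem \ref{thm:spkdeconsist} applied with $f=\fobs$), and then shows the second term is zero. Your verification that $\fobs \in L^2\l(\rn^d\r)$ (via $\fcon \le u$ almost everywhere, so $\fcon \in L^1 \cap L^\infty$) is a detail the paper leaves implicit, and your handling of $\varepsilon = 0$ is fine.

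The one substantive issue is your final step. You discharge the identification $\binfspkde = \ftar$ by citing Proposition \ref{deconprop}, but within this paper that citation is circular: the paper explicitly states that the proof of Proposition \ref{deconprop} \emph{is} an intermediate step of the proof of this very theorem, and its appendix proof of the theorem says ``the rest of this proof will effectively prove Proposition 2.'' There is no independent proof of Proposition \ref{deconprop} anywhere for you to lean on, so as written your load-bearing step points back at the result being proved. The fix is short and is exactly what the paper supplies: with $u$ the constant from Assumption A, set $\alpha' = \frac{\varepsilon u}{1-\varepsilon}$; almost everywhere off $\supp(\ftar)$ one has $\frac{1}{1-\varepsilon}\fobs - \alpha' = \frac{\varepsilon}{1-\varepsilon}\l(\fcon - u\r) \le 0$, while almost everywhere on $\supp(\ftar)$ one has $\frac{1}{1-\varepsilon}\fobs - \alpha' = \ftar + \frac{\varepsilon u}{1-\varepsilon} - \alpha' = \ftar$, so $\max\l\{\frac{1}{1-\varepsilon}\fobs(\cdot) - \alpha', 0\r\} = \ftar$ almost everywhere, which is a pdf; the uniqueness of the normalizing constant from Lemma \ref{lem:popspkde}, combined with Lemma \ref{lem:infspkde}, then forces $\binfspkde = \ftar$. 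With that computation inserted in place of the appeal to Proposition \ref{deconprop}, your argument coincides with the paper's.
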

To our knowledge this result is the first of its kind, wherein a nonparametric density estimator is able to asymptotically recover the underlying density in the presence of contaminated data.

\section{Experiments}
For all of the experiments optimization was performed using projected gradient descent. The projection onto the probabilistic simplex was done using the algorithm developed in \cite{duchi08} (which was actually originally discovered a few decades ago \cite{Brucker, pardalos}).
\subsection{Synthetic Data}
To show that the SPKDE's theoretical properties are manifested in practice we conducted an idealized experiment where the contamination is uniform and the contamination proportion is known. Figure \ref{idealfig} exhibits the ability of the SPKDE to compensate for uniform noise. Samples for the density estimator came from a mixture of the ``Target'' density with a uniform contamination on $\l[-2,2\r]$, sampling from the contamination with probability $\varepsilon=0.2$. This experiment used 500 samples and the robustness parameter $\beta$ was set to $\frac{1}{1-\varepsilon}=\frac{5}{4}$ (the value for perfect asymptotic decontamination). 
\begin{wrapfigure}{0}{0.5\textwidth}
	\vspace{-2em}
	\begin{center}
		\includegraphics[width=0.5\textwidth]{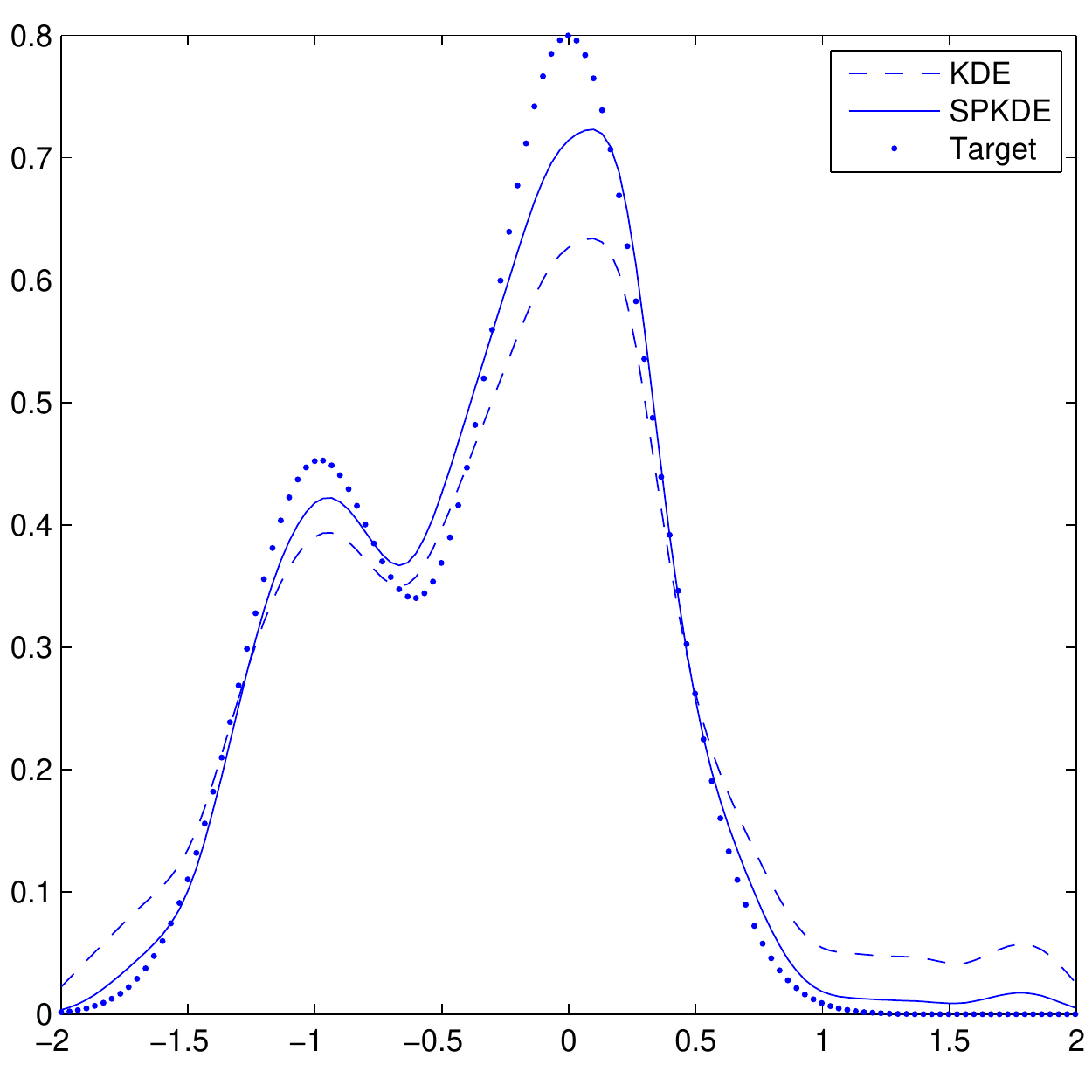}
	\end{center}
	\caption{KDE and SPKDE in the presence of uniform noise}
	\label{idealfig}
	\vspace{-3em}
\end{wrapfigure}

The SPKDE performs well in this situation and yields a scaled and shifted version of the standard KDE. This scale and shift is especially evident in the preservation of the bump on the right hand side of Figure \ref{idealfig}.

\subsection{Datasets}
In our remaining experiments we investigate two performance metrics for different amounts of contamination. We perform our experiments on 12 classification datasets (names given in the appendix) where the $0$ label is used as the target density and the $1$ label is the anomalous contamination. This experimental setup {\bf does not} satisfy Assumption A. The training datasets are constructed with $n_0$ samples from label $0$ and $\frac{\varepsilon}{1-\varepsilon} n_0$ samples from label $1$, thus making an $\varepsilon$ proportion of our samples come from the contaminating density. For our experiments we use the values $\varepsilon = 0, 0.05, 0.1, 0.15, 0.20, 0.25, 0.30$. Given some dataset we are interested in how well our density estimators $\widehat{f}$ estimate the density of the $0$ class of our dataset, $f_{tar}$. Each test is performed on 15 permutations of the dataset. The experimental setup here is similar to the setup in Kim \& Scott \cite{rkdejmlr}, the most significant difference being that $\sigma$ is set differently.

\subsection{Performance Criteria}
First we investigate the Kullback-Leibler (KL) divergence
\begin{eqnarray*}
	D_{KL}\l(\widehat{f} ||f_0\r)=\int \widehat{f}\l(x\r) \log\l(  \frac{\widehat{f}\l(x\r)}{f_0\l(x\r)}\r)dx.
\end{eqnarray*}
This KL divergence is large when $\widehat{f}$ estimates $f_0$ to have mass where it does not. For example, in our context, $\widehat{f}$ makes mistakes because of outlying contamination. We estimate this KL divergence as follows. Since we do not have access to $f_0$, it is estimated from the testing sample using a KDE, $\widetilde{f_0}$. The bandwidth for $\widetilde{f_0}$ is set using the testing data with a LOOCV line search minimizing $D_{KL}\left( f_0||\widetilde{f_0} \right)$, which is described in more detail below. We then approximate the integral using a sample mean by generating samples from $\widehat{f}$, $\l\{x_i'\r\}_1^{n'}$ and using the estimate
\begin{eqnarray*}
	D_{KL}\l(\widehat{f} ||f_0\r)\approx \frac{1}{n'}\sum_1^{n'} \log \l(\frac{\widehat{f}\l(x_i'\r)}{\widetilde{f_0}\l(x_i'\r)} \r).
\end{eqnarray*}
The number of generated samples $n'$ is set to double the number of training samples.

Since KL divergence isn't symmetric we also investigate
\begin{eqnarray*}
	D_{KL}\l( f_0||\widehat{f}\r)
	&=& \int f_0\l(x\r) \log \l(\frac{f_0 \l(x\r)}{\widehat{f}\l(x\r)} \r) dx=C - \int f_0\l(y\r)\log\l(\widehat{f}\l(y\r)\r) dy,
\end{eqnarray*}
where $C$ is a constant not depending on $\widehat{f}$. This KL divergence is large when $f_0$ has mass where $\widehat{f}$ does not. The final term is easy to estimate using expectation. Let $\l\{x''_i\r\}_1^{n''}$ be testing samples from $f_0$ (not used for training). The following is a reasonable approximation 
\begin{eqnarray*}
	- \int f_0\l(y\r)\log\l(\widehat{f}\l(y\r)\r) dy 
	\approx - \frac{1}{n''}\sum_1^{n''}\log\l(\widehat{f}\l(x''_i\r)\r).
\end{eqnarray*}

For a given performance metric and contamination amount, we compare the mean performance of two density estimators across datasets using the Wilcoxon signed rank test \cite{wilcoxon:1945}. Given $N$ datasets we first rank the datasets according to the absolute difference between performance criterion, with $h_i$ being the rank of the $i$th dataset. For example if the $j$th dataset has the largest absolute difference we set $h_j=N$ and if the $k$th dataset has the smallest absolute difference we set $h_k=1$. We let $R_1$ be the sum of the $h_i$s where method one's metric is greater than metric two's and $R_2$ be the sum of the $h_i$s where method two's metric is larger. The test statistic is $\min(R_1,R_2)$, which we do not report. Instead we report $R_1$ and $R_2$ and the $p$-value that the two methods do not perform the same on average. $R_i < R_j$ is indicative of method $i$ performing better than method $j$. 
\subsection{Methods}
The data were preprocessed by scaling to fit in the unit cube. This scaling technique was chosen over whitening because of issues with singular covariance matrices. The Gaussian kernel was used for all density estimates. For each permutation of each dataset, the bandwidth parameter is set using the training data with a LOOCV line search minimizing $D_{KL}\l(\fobs||\widehat{f}\r)$, where $\widehat{f}$ is the KDE based on the contaminated data and $\fobs$ is the observed density. This metric was used in order to maximize the performance of the traditional KDE in KL divergence metrics. For the SPKDE the parameter $\beta$ was chosen to be $2$ for all experiments. This choice of $\beta$ is based on a few preliminary experiments for which it yielded good results over various sample contamination amounts. The construction of the RKDE follows exactly the methods outlined in the ``Experiments'' section of Kim \& Scott \cite{rkdejmlr}. It is worth noting that the RKDE depends on the loss function used and that the Hampel loss used in these experiments very aggressively suppresses the kernel weights on the tails. Because of this we expect that RKDE performs well on the $D_{KL}\l(\widehat{f}||f_0\r)$ metric. We also compare the SPKDE to a kernel density estimator constructed from samples declared non-anomalous by a level set anomaly detection as described in Section \ref{othermodels}. To do this we first construct the classic KDE, $\bar{f}_\sigma^n$ and then reject those samples in the lower 10th percentile of $\bar{f}_\sigma^n(X_i)$. Those samples not rejected are used in a new KDE, the ``rejKDE'' using the same $\sigma$ parameter.

\subsection{Results}
We present the results of the Wilcoxon signed rank tests in Table 1. Experimental results for each dataset can be found in the appendix. From the results it is clear that the SPKDE is effective at compensating for contamination in the $D_{KL}\l(\widehat{f}||f_0\r)$ metric, albeit not quite as well as the RKDE. The main advantage of the SPKDE over the RKDE is that it significantly outperforms the RKDE in the $D_{KL}\l(f_0||\widehat{f}\r)$ metric. The rejKDE performs significantly worse than the SPKDE on almost every experiment. Remarkably the SPKDE outperforms the KDE in the situation with no contamination ($\varepsilon=0$) for both performance metrics.
\begin{center}
	\begin{table*}[!tb]
	\vspace{-2em}
	\caption{Wilcoxon signed rank test results}
		{\small
		\hfill{}
		\begin{tabular}{| *{7}{c|}c || *{7}{c|}}
			\hline
			&\multicolumn{7}{|c||}{Wilcoxon Test Applied to $D_{KL}\l(\widehat{f}||f_0\r)$}&\multicolumn{7}{|c|}{Wilcoxon Test Applied to $D_{KL}\l(f_0||\widehat{f}\r)$}\\
			\hline
			$\varepsilon$&0&0.05&0.1&0.15&0.2&0.25&0.3&0&0.05&0.1&0.15&0.2&0.25&0.3\\
			\hline
			SPKDE&5&0&1&2&0&0&0&37&30&27&21&17&16&17\\
			KDE&73&78&77&76&78&78&78&41&48&51&57&61&62&61\\
			p-value&.0049&5e-4&1e-3&.0015&5e-4&5e-4&5e-4&.91&.52&.38&.18&.092&.078&.092\\
			\hline
			SPKDE&53&59&58&67&63&61&63&14&14&14&10&10&12&12\\
			RKDE&25&19&20&11&15&17&15&64&64&64&68&68&66&66\\
			p-value&0.31&0.13&0.15&.027&.064&.092&.064&.052&.052&.052&.021&.021&.034&.034\\
			\hline
			SPKDE&0&0&1&1&0&2&0&29&21&19&15&13&9&11\\
			rejKDE&78&78&77&77&78&76&78&49&57&59&63&65&69&67\\
			p-value&5e-4&5e-4&1e-3&1e-3&5e-4&.0015&5e-4&.47&.18&.13&.064&.043&.016&.027\\
			\hline
		\end{tabular}}

		\hfill{}
	\end{table*}
\end{center}
\section{Conclusion}
Robustness in the setting of nonparametric density estimation is a topic that has received little attention despite extensive study of robustness in the parametric setting. In this paper we introduced a robust version of the KDE, the SPKDE, and developed a new formalism for analysis of robust density estimation. With this new formalism we proposed a contamination model and decontaminating transform to recover a target density in the presence of noise. The contamination model allows that the target and contaminating densities have overlapping support and that the basic shape of the target density is not modified by the contaminating density. The proposed transform is computationally prohibitive to apply directly to the finite sample KDE and the SPKDE is used to approximate the transform. The SPKDE was shown to asymptotically converge to the desired transform. Experiments have shown that the SPKDE is more effective than the RKDE at minimizing $D_{KL}\l(f_0||\widehat{f}\r)$. Furthermore the p-values for these experiments were smaller than the p-values for the $D_{KL}\l(\widehat{f}||f_0\r)$ experiments where the RKDE outperforms the SPKDE.
\subsubsection*{Acknowledgements}
This work support in part by NSF Awards 0953135, 1047871, 1217880, 1422157. We would also like to thank Samuel Brodkey for his assistance with the simulation code.
\clearpage
\section*{Appendix}
\subsection*{Proofs}
\begin{proof}[Proof of Lemma 1 and 2]
	We will prove Lemma 1 and 2 simultaneously. The $f$ in Lemma 1 and Lemma 2 are the same and all notation is consistent between the two lemmas. First we will show that $\l\|g_{\alpha,\beta}\r\|_{L^1}$ is continuous in $\alpha$. Let $\l\{a_i\r\}_1^\infty$ be a non-negative sequence in $\rn$ converging to arbitrary $a \ge0$. Since $g_{a_i,\beta}$ is dominated by $\beta f$ and $g_{a_i,\beta}$ converges to $g_{a,\beta}$ pointwise, by the dominated convergence theorem we know $\l\|g_{a_i,\beta}\r\|_{L^1} \to \l\|g_{a,\beta}\r\|_{L^1}$, thus proving the continuity of $\l\|g_{\alpha,\beta}\r\|_{L^1}$. Since $\l\|g_{0,\beta}\r\|_{L^1} = \beta > 1$ and $\l\|g_{\alpha,\beta}\r\|_{L^1} \to 0$ as $\alpha \to \infty$, by the intermediate value theorem there exists $\alpha'$ such that $\l\|g_{\alpha',\beta}\r\|_{L^1} = 1$. This proves the existence part of Lemma 1. Let $\tispkde = g_{\alpha',\beta}$.
	Clearly $\mathcal{D}$ is convex so the closure (in $L^2$) $\bar{\mathcal{D}}$ is also convex. Since $\bar{\mathcal{D}}$ is a closed and convex set in a Hilbert space, $\arg \min_{g \in \bar{\mathcal{D}}} \l\|g-\beta f\r\|_{L^2}$ admits a unique minimizer. Note that $\tispkde$ being the unique minimizer is equivalent to showing that, for all $c$ in $\bar{\mathcal{D}}$ (Theorem 3.14 in \cite{bauschke2011convex})
	\begin{eqnarray*}
		\l<c-\tispkde,\beta f - \tispkde\r> \le 0.
	\end{eqnarray*}
	Because this is continuous over the $c$ term and $\mathcal{D}$ is dense in $\bar{\mathcal{D}}$ we need only show that the inequality holds over all $c \in \mathcal{D}$. To this end, note that for all $x$, 
	\begin{eqnarray*}
		\beta f\l(x\r) - \max\l\{0,\beta f\l(x\r) - \alpha'  \r\} \le \alpha'
	\end{eqnarray*}
	and that if $\tispkde(x)>0$ then
	\begin{eqnarray*}
		\tispkde(x) = \beta f(x) -\alpha'.
	\end{eqnarray*}
	From this we have 
	\begin{eqnarray*}
		\lefteqn{\l<c-\tispkde,\beta f - \tispkde  \r> }\\
		&=&\l<c,\beta f-\tispkde\r> - \l<\tispkde,\beta f-\tispkde\r>\\
		&=& \int c\l(x\r) \l(\beta f\l(x\r) - \tispkde\l(x\r)\r) dx\\
		&&- \int \tispkde\l(x\r) \l( \beta f\l(x\r) - \tispkde\l(x\r)\r) dx\\
		&\le& \int c\l(x\r) \alpha' dx\\
		&&- \int \tispkde\l(x\r)\l(\beta f\l(x\r) - \l(\beta f\l(x\r) -\alpha'\r)\r) dx\\
		&=& \alpha'-\alpha' \\
		&=&0.
	\end{eqnarray*}
	From this we get that $\tispkde$ is the unique minimizer. If there existed $\alpha''\neq \alpha'$ such that $g_{\alpha'',\beta}$ was also a pdf, then there would be two minimizers of $\arg \min_{g \in \bar{\mathcal{D}}} \l\|g-\beta f\r\|_{L^2}$, which is impossible since the minimizer is unique, thus proving the uniqueness of $\alpha'$.
\end{proof}
\begin{proof}[Proof of Proposition 3]
	In this proof we will be working with a hypothetical $\ftar$ and $\fcon$ in $\sD$. Define ``Assumption B'' to be that there exists two sets $S\subset\supp(\ftar)$ and $T\subset \rn^d$, which have nonzero Lebesgue measure, such that $\fcon(T)>\fcon(S)$. We will now show that Assumption A not holding is equivalent to Assumption B.

	A$\Rightarrow$not B: Let $S \subset \supp(\ftar)$ and $T \subset \rn^d$ both have nonzero Lebesgue measure. From Assumption A we know for Lebesgue almost all $s\in S$ that $\fcon(s) =u$, for some $u$ and $\fcon(T) \le u$ Lebesgue almost everywhere.

	not A$\Rightarrow$B: If Assumption A is not satisfied either $\fcon$ is not almost Lebesgue everywhere uniform over $\supp(\ftar)$ or $\fcon$ is Lebesgue almost everywhere uniform on $\supp\left( \ftar \right)$ with value $u$ but there exists some set $Q \subset \rn^d$ of nonzero Lebesgue measure such that $\fcon\left( Q \right) > u$. Both of these situations clearly imply Assumption B.  

	This proves that the negation of Assumption A is Assumption B.

	Let $\fcon$ and $\ftar$ satisfy Assumption B and $\varepsilon \in (0,1)$ be arbitrary. By Lemma 1 we know there exists a unique $\alpha$ such that $\max \l\{\frac{1}{1-\varepsilon}\left( (1-\varepsilon)\ftar(\cdot)+\varepsilon\fcon \right)- \alpha,0\ \r\} $ is a pdf. First we will show that $\alpha < \esssup_x \frac{\varepsilon}{1-\varepsilon}\fcon(x)$. If $\esssup_x \frac{\varepsilon}{1-\varepsilon}\fcon(x) = \infty$ then clearly $\alpha <\esssup_x \frac{\varepsilon}{1-\varepsilon}\fcon(x)$. Let $r= \esssup_x \frac{\varepsilon}{1-\varepsilon}\fcon(x) < \infty$. Let $S, T \subset \rn^d$ satisfy the properties in the definition of Assumption B. Observe that
	\begin{eqnarray*}
		\lefteqn{\int \max \l\{\frac{1}{1-\varepsilon}\left( (1-\varepsilon)\ftar(x)+\varepsilon\fcon(x) \right)- r,0\ \r\} dx}\\
		&=&\int \max \l\{\ftar(x)+\frac{\varepsilon}{1-\varepsilon}\fcon(x) - r,0\r\} dx\\
		&=&\int_S \max \l\{\ftar(x)+\frac{\varepsilon}{1-\varepsilon}\fcon(x) - r,0\r\} dx+\int_{S^C} \max \l\{\ftar(x)+\frac{\varepsilon}{1-\varepsilon}\fcon(x) - r,0\r\} dx.
	\end{eqnarray*}
	Note that on the set $S$ we have that $\max\l\{\ftar(x)+\frac{\varepsilon}{1-\varepsilon}\fcon(x) - r,0\r\} <\ftar$. Now we have

	\begin{align*}
		\int_S &\max   \l\{\ftar(x)+\frac{\varepsilon}{1-\varepsilon}\fcon(x) - r,0\r\} dx+\int_{S^C} \max \l\{\ftar(x)+\frac{\varepsilon}{1-\varepsilon}\fcon(x) - r,0\r\} dx\\
		&<\int_S \ftar (x)dx+\int_{S^C} \ftar(x) dx\\
		&<1                                     
	\end{align*}
	and thus $\alpha <r$ (i.e. the cutoff value for $R_\varepsilon^A\left( \fobs \right)$ is lower than the essential supremum of $\fcon$).
	Because $\alpha<\esssup_x \frac{\varepsilon}{1-\varepsilon}\fcon(x)$, on the set for which $\frac{\varepsilon}{1-\varepsilon}\fcon(\cdot) >\alpha$ (which has nonzero Lebesgue measure) we have that $\max \l\{\ftar(\cdot)+\frac{\varepsilon}{1-\varepsilon}\fcon - \alpha,0\r\} > \ftar$, so $\max \l\{\ftar(\cdot)+\frac{\varepsilon}{1-\varepsilon}\fcon - \alpha,0\r\} \neq \ftar$.
\end{proof}

\begin{proof}[Proof of Theorem 1]
	Given a set $S \subset L^2\l(\rn^d\r)$ let $P_S$ be the projection operator onto $S$. Consider the following decomposition
	\begin{eqnarray*}
		\l\|\spkde-\ispkde\r\|_{L^2}
		&=& \l\|P_{\mathcal{D}_\sigma^n} \beta \bar{f}_\sigma^n - P_{\bar{\mathcal{D}}} \beta f\r\|_{L^2}\\
		&\le& \l\|P_{\mathcal{D}_\sigma^n} \beta \bar{f}_\sigma^n - P_{\mathcal{D}_\sigma^n} \beta f\r\|_{L^2} 
		+ \l\|P_{\dsign}\beta f-P_{\bar{\mathcal{D}}} \beta f \r\|_{L^2}
	\end{eqnarray*}
	Note that we are projecting onto $\bar{\sD}$ rather than $\sD$ does not matter as was shown in the proof of Lemma 1 and 2. Furthermore note that $\ispkde = P_{\bar{\mathcal{D}}} \beta f$. The projection operator onto a closed convex set is Lipschitz continuous with constant 1 (Proposition 4.8 in \cite{bauschke2011convex}) so the first term goes to zero by standard KDE consistency (which we prove later). Convergence of the second term is a bit more involved.
	First we will show that $\l\|P_\dsign \beta f -\beta f\r\|_{L^2} \cip \l\|  P_{\bar{\mathcal{D}}} \beta f -\beta f\r\|_{L^2}$, and then we will show that this implies $\l\|P_\dsign \beta f -\ispkde\r\|_{L^2} \cip 0$. 

	We know $\mathcal{D}_\sigma^n \subset \bar{\mathcal{D}}$ so $\l\|P_\dsign \beta f -\beta f\r\|_{L^2}  \ge \l\|  P_{\bar{\mathcal{D}}} \beta f -\beta f\r\|_{L^2}$. We also know that for all $\delta \in \dsign$, $\l\|P_\dsign \beta f -\beta f\r\|_{L^2}\le \l\|\delta-\beta f\r\|_{L^2}$. Because of these two facts, in order to show $\l\|P_\dsign \beta f -\beta f\r\|_{L^2} \cip \l\|  P_{\bar{\mathcal{D}}} \beta f -\beta f\r\|_{L^2}$, it is sufficient to find a sequence $\l\{g_\sigma^n\r\}\subset \mathcal{D}_\sigma^n$ such that $\l\|g_\sigma^n - \ispkde\r\|_{L^2} \cip 0$. Since $\beta f > \ispkde$ we can generate $g_\sigma^n$ by applying rejection sampling to $X_1,\dots,X_n$ to generate a subsample $X_1',\dots, X_{m_n}'$ which are iid from $\ispkde$. For all $i$ the event of $X_i$ getting rejected is independent with equal probability. The probability of a sample not being rejected is greater than zero so there exists a $b>0$ such that $\mathbb{E}\l[m_n\r] >bn$. From this and the strong law of large numbers we have that $\mathbb{P}\l(m_n \sigma^d \to \infty\r) =1$. Using this subsample we can construct $g_\sigma^n \triangleq \frac{1}{m_n}\sum_1^{m_n} k_\sigma \l(\cdot, X_i'\r) \in \mathcal{D}_\sigma^n$ which is a KDE of $\ispkde$, so by standard KDE consistency $\l\|\ispkde - g_\sigma^n\r\|_{L^2} \cip0$, and thus $\l\|P_\dsign \beta f -\beta f\r\|_{L^2} \cip \l\|  P_{\bar{\mathcal{D}}} \beta f -\beta f\r\|_{L^2}$. 

	Let $\ftiln \triangleq P_{\dsign}\beta f$. Finally we are going to show that $\l\|P_\dsign \beta f -\beta f\r\|_{L^2} \cip \l\|  P_{\bar{\mathcal{D}}} \beta f -\beta f\r\|_{L^2}$ implies that $\l\|\ftiln - \ispkde\r\|_{L^2} \cip 0$. The functional $\l\| \beta f -\cdot\r\|_{L^2}^2$ is strongly convex with convexity constant 2 (Example 10.7 in \cite{bauschke2011convex}). This means that for any $a\in \l(0,1\r)$, we have
	\begin{eqnarray*}
		\lefteqn{\l\|\beta f -\l(a \ftiln +\l(1-a\r)\ispkde\r)\r\|_{L^2}^2 + a\l(1-a\r)\l\|\ftiln - \ispkde\r\|_{L^2}^2}\\
		&& \le a \l\|\beta f - \ftiln\r\|_{L^2}^2 + \l(1-a\r) \l\|\beta f - \ispkde\r\|_{L^2}^2 .
	\end{eqnarray*}
	Letting $a =1/2$ gives us 
	\begin{eqnarray*}
		\l\| \beta f - \frac{\ftiln + \ispkde}{2} \r\|_{L^2}^2 + \frac{1}{4}\l\|\ftiln - \ispkde\r\|_{L^2}^2 
		\le \frac{1}{2} \l\|\beta f - \ftiln\r\|_{L^2}^2 + \frac{1}{2} \l\|\beta f - \ispkde\r\|_{L^2}^2
	\end{eqnarray*}
	Since 
	$$  \l\|\beta f-\ispkde\r\|_{L^2}^2 \le\l\|\beta f - \ftiln\r\|_{L^2}^2$$
	and 
	$$  \l\|\beta f - \ispkde\r\|_{L^2}^2 \le\l\|\beta f - \frac{\ftiln + \ispkde}{2}\r\|_{L^2}^2$$
	we have
	\begin{eqnarray*}
		\l\| \beta f- \ispkde\r\|_{L^2}^2 + \frac{1}{4} \l\|\ftiln  - \ispkde\r\|_{L^2}^2 
		\le \l\|\beta f - \ftiln\r\|_{L^2}^2
	\end{eqnarray*}
	or equivalently 
	\begin{eqnarray*}
		\l\|\ftiln  - \ispkde\r\|_{L^2}^2 
		\le 4 \l(\l\|\beta f -\ftiln  \r\|_{L^2}^2-	\l\| \beta f- \ispkde\r\|_{L^2}^2 \r).
	\end{eqnarray*}
	The right side of the last equation goes to zero in probability, thus finishing our proof.
\end{proof}
\begin{proof}[Proof of KDE $L^2$ consistency]
	Let $\fsigbar = \mathbb{E}\l[k_\sigma\l(\cdot,X_i\r)\r]	= \int k_\sigma\l(\cdot,x\r) g\l(x\r)dx$. Using the triangle inequality we have
	\begin{eqnarray*}
		\l\|f-\fsigbar^n\r\|_{L^2}
		\le \l\|f-\fsigbar \r\|_{L^2}+ \l\|\fsigbar - \fsigbar^n\r\|_{L^2}.
	\end{eqnarray*}
	The left summand goes to zero as $\sigma \to 0$ by elementary analysis (see Theorem 8.14 in \cite{folland1999real}). To take care of the right side with use the following lemma which is a Hilbert space version of Hoeffding's inequality from Steinwart \& Christmann \cite{steinwart08}, Corollary 6.15.
	\begin{lem*}[\bf Hoeffding's inequality in Hilbert space]
		Let $\l(\Omega,\mathcal{A},P\r)$ be a probability space, $H$ be a separable Hilbert space, and $B>0$ . Furthermore, let $\xi_1,\dots \xi_n:\Omega \to H$ be independent $H$-valued random variables satisfying $\l\|\xi_i\r\|_\infty\le B$ for all $i$. Then, for all $\tau>0$, we have
		\begin{eqnarray*}
			P\l(\l\|\frac{1}{n}\sum_1^n\l(\xi_i-\mathbb{E}\l[\xi_i\r] \r)\r\|_H \ge B \sqrt{\frac{2\tau}{n}}+B\sqrt{\frac{1}{n}}+\frac{4B\tau}{3n}\r)\le e^{-\tau}..
		\end{eqnarray*}
	\end{lem*}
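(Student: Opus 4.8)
This is a vector-valued Hoeffding/Bernstein concentration inequality, cited from Steinwart \& Christmann; the plan is to give a self-contained derivation. Write $S_n \triangleq \frac{1}{n}\sum_{i=1}^n(\xi_i - \mathbb{E}[\xi_i])$ and $Z \triangleq \|S_n\|_H$, so that the goal is a one-sided upper tail bound on the scalar nonnegative random variable $Z$. The three terms on the right-hand side suggest the natural split: I would account for $B\sqrt{1/n}$ by bounding the mean $\mathbb{E}[Z]$, and for $B\sqrt{2\tau/n}$ (with $\frac{4B\tau}{3n}$ as harmless slack) by concentrating $Z$ about its mean.

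First I would bound $\mathbb{E}[Z]$. By Jensen's inequality $\mathbb{E}[Z] \le \sqrt{\mathbb{E}[Z^2]}$, and since the centered summands $\xi_i - \mathbb{E}[\xi_i]$ are independent and mean zero, all cross terms vanish, giving $\mathbb{E}[Z^2] = \frac{1}{n^2}\sum_{i=1}^n \mathbb{E}\|\xi_i - \mathbb{E}[\xi_i]\|_H^2$. The bound $\|\xi_i\|_\infty \le B$ yields $\mathbb{E}\|\xi_i - \mathbb{E}[\xi_i]\|_H^2 \le \mathbb{E}\|\xi_i\|_H^2 \le B^2$, hence $\mathbb{E}[Z^2] \le B^2/n$ and $\mathbb{E}[Z] \le B\sqrt{1/n}$. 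This is exactly the second term.

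Next I would concentrate $Z$ about its mean. Viewing $Z = F(\xi_1,\dots,\xi_n)$ as a function of independent inputs, replacing $\xi_i$ by an independent copy $\xi_i'$ changes $S_n$ by $\frac{1}{n}(\xi_i - \xi_i')$, so by the reverse triangle inequality $|F - F'| \le \frac{1}{n}\|\xi_i - \xi_i'\|_H \le \frac{2B}{n}$; thus $F$ satisfies the bounded-differences condition with $c_i = 2B/n$. McDiarmid's inequality then gives $\mathbb{P}(Z - \mathbb{E}[Z] \ge t) \le \exp(-2t^2/\sum_i c_i^2) = \exp(-t^2 n/(2B^2))$, and taking $t = B\sqrt{2\tau/n}$ makes the right-hand side equal to $e^{-\tau}$.

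Combining the two steps gives $\mathbb{P}(Z \ge B\sqrt{1/n} + B\sqrt{2\tau/n}) \le e^{-\tau}$, and since the stated threshold $B\sqrt{2\tau/n} + B\sqrt{1/n} + \frac{4B\tau}{3n}$ only exceeds $B\sqrt{1/n} + B\sqrt{2\tau/n}$, the claimed inequality follows by monotonicity of the tail. I expect the concentration step to be the main obstacle: $Z$ is the norm of a sum of independent vectors, not a sum of independent scalars, so scalar Hoeffding does not apply directly, and the device of treating $Z$ as a bounded-differences functional of the inputs is what converts the vector-valued problem into a scalar concentration. The extra $\frac{4B\tau}{3n}$ in the cited statement simply reflects that Steinwart \& Christmann establish the sharper Bernstein form (keeping the summand variance $\sigma^2 \le B^2$ separate and paying a linear-in-$\tau$ price for it); from the McDiarmid viewpoint that term is only slack, so the cruder argument above already suffices for our application.
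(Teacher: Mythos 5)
Your proposal is correct, but it takes a genuinely different route from the paper, which in fact offers no proof at all: the lemma is imported verbatim from Steinwart \& Christmann (Corollary 6.15), where it arises as a specialization of a Bernstein-type inequality in Hilbert spaces that tracks the variance parameter $\sigma^2 \le \mathbb{E}\|\xi_i\|_H^2$ separately from the sup bound $B$; setting $\sigma = B$ produces exactly the three-term threshold quoted here, including the linear term $\frac{4B\tau}{3n}$. Your argument --- Jensen plus orthogonality of the centered independent summands to get $\mathbb{E}\|S_n\|_H \le B\sqrt{1/n}$, then McDiarmid's bounded-differences inequality with constants $c_i = 2B/n$ for the fluctuation around the mean --- is self-contained and elementary, and it even yields a slightly stronger bound, dropping the $\frac{4B\tau}{3n}$ term, which you correctly observe only enlarges the threshold and so can be absorbed by monotonicity of the tail. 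The individual steps all check out: the cross terms vanish because for independent Bochner-integrable variables $\mathbb{E}\left\langle X, Y\right\rangle = \left\langle \mathbb{E}X, \mathbb{E}Y\right\rangle$ (separability of $H$ gives strong measurability, so this and the measurability of $\|S_n\|_H$ are legitimate); McDiarmid applies to real-valued functions of independent random elements of an arbitrary measurable space, so the $H$-valued inputs pose no obstacle; and the reverse triangle inequality gives the bounded-differences constant on the almost-sure event $\|\xi_i\|_H \le B$, which suffices. What the cited Bernstein form buys --- sharper tails when the variance is much smaller than $B^2$ --- is irrelevant to this paper's application, which instantiates $B = \|k_\sigma(\cdot, X_i)\|_{L^2} = Q\sigma^{-d/2}$ and exploits no variance information, so your cruder-looking derivation is perfectly adapted and has the advantage of making the consistency proof self-contained.
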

	Note that $\l\|\xi_i\r\|_\infty =  \operatorname*{ess\,sup}_{\omega \in \Omega}\l\|\xi_i\l(\omega\r)\r\|_H$. Plugging in $\xi_i= k_\sigma\l(\cdot,X_i\r)$ we get 
	\begin{eqnarray*}
		&&\lefteqn{P\bigg(\l\|\fsigbar^n-\fsigbar \r\|_{L^2} \ge \l\|k_\sigma\l(\cdot,X_i\r)\r\|_{L^2} \sqrt{\frac{2\tau}{n}}}\\
		&&+ \l\|k_\sigma\l(\cdot,X_i\r)\r\|_{L^2}\sqrt{\frac{1}{n}}+\frac{4\l\|k_\sigma\l(\cdot,X_i\r)\r\|_{L^2}\tau}{3n}\bigg)\le e^{-\tau}.
	\end{eqnarray*}
	It is straightforward to show that there exists $Q>0$ such that $\l\|k_\sigma\l(\cdot,X_i\r)\r\|_{L^2}= Q\sigma^{-d/2}$, giving us
	\begin{eqnarray*}
		\lefteqn{P\bigg(\l\|\fsigbar^n-\fsigbar \r\|_{L^2} \ge Q\sigma^{-d/2}\sqrt{\frac{2\tau}{n}}}\\
		&&+Q\sigma^{-d/2}\sqrt{\frac{1}{n}}+\frac{4Q\sigma^{-d/2}\tau}{3n}\bigg)\le e^{-\tau}.
	\end{eqnarray*}
	Letting $n\sigma^d \to \infty$ sends all of the summands in the previous expression to zero for fixed $\tau$. Because of this there exists a positive sequence $\l\{\tau_i\r\}_1^\infty$ such that $\tau_i \to \infty$ and but increases slowly enough that $Q\sigma^{-d/2}\sqrt{\frac{2\tau_n}{n}}+Q\sigma^{-d/2}\sqrt{\frac{1}{n}}+\frac{4Q\sigma^{-d/2}\tau_n}{3n}\to 0$ as $n\to \infty$, where $\sigma$ depends implicitly on $n$. From this it is clear that $\l\|\fsigbar^n-\fsigbar\r\|_{L^2} \cip 0$.
\end{proof}
\begin{proof}[Proof of Corollary 1]
	Let $\lambda$ be the Lebesgue measure. Let $S\subset \rn^d$ be such that $\lambda\l(S\r) <\infty$. By H\"{o}lders inequality we have
	\begin{eqnarray*}
		\l\|\l(\ispkde-\spkde\r) \chi_S \r\|_{L^1}
		&<& \l\|\ispkde-\spkde\r\|_{L^2} \l\|\chi_S\r\|_{L^2}\\
		&=&\l\|\ispkde-\spkde\r\|_{L^2} \sqrt{\lambda\l(S\r)}.
	\end{eqnarray*}
	From this we have that, that $\spkde$ converges in probability to $\ispkde$ in $L^1$ norm, when restricted to a set of finite Lebesgue measure. 
	Let $\delta >0$ be arbitrary. Choose $S$ to be a set of finite measure large enough that $\int_{S^C} \ispkde\l(x\r)dx <\delta/8$. Note that this implies $\l\|\ispkde\chi_S\r\|_{L^1}\ge\frac{7}{8}\delta$, a fact we will use later. Notice that
	\begin{eqnarray*}
		\l\|\ispkde-\fsign\r\|_{L^1} = \l\|\l(\ispkde-\fsign\r) \chi_S\r\|_{L^1}+\l\|\l(\ispkde-\fsign\r) \chi_{S^C}\r\|_{L^1}.
	\end{eqnarray*}
	We have already shown that the left summand in the converges in probability to zero, so it becomes bounded by $\delta/8$ with probability going to one. To finish the proof we need only show that the right summand is bounded by $ \frac{7}{8} \delta$ with probability going to one. Using the triangle inequality we have
	\begin{eqnarray*}
		\l\|\l(\ispkde-\fsign\r) \chi_{S^C}\r\|_{L^1}
		&\le& \l\|\ispkde \chi_{S^C}\r\|_1 + \l\|\fsign\chi_{S^C}\r\|_{L^1}\\
		&<& \delta/8 +  \l\|\fsign\chi_{S^C}\r\|_{L^1}.
	\end{eqnarray*}
	Now it is sufficient to show that $\l\|\fsign\chi_{S^C}\r\|_1$ becomes bounded by $\frac{3}{4}\delta$ with probability going to one. To finish the proof,
	\begin{eqnarray*}
		\l\|\fsign \chi_{S}\r\|_{L^1} +\l\|\fsign \chi_{S^C}\r\|_{L^1}=1
	\end{eqnarray*}
	therefore 
	\begin{eqnarray*}
		\l\|\fsign \chi_{S^C}\r\|_{L^1}=1-\l\|\fsign \chi_{S}\r\|_{L^1} 
	\end{eqnarray*}
	and we know that $\l\|\fsign \chi_{S}\r\|_{L^1} \cip \l\|\ispkde\chi_{S} \r\|_{L^1}\ge \frac{7}{8}\delta$ so with probability going to one $ \l\|\fsign \chi_{S}\r\|_{L^1}\ge\delta/2$ and thus $\l\|\fsign \chi_{S^C}\r\|_{L^1}<\delta/2$.
\end{proof}
\begin{proof}[Proof of Theorem 2]
	By the triangle inequality we have $\l\|\fsign-\ftar \r\|_{L^1} \le \l\|\fsign - \ispkde \r\|_{L^1} + \l\|\ispkde-\ftar \r\|_{L^1}$. The left summand in the previous inequality goes to zero by Corollary 1, so it is sufficient to show that the right term is zero. The rest of this proof will effectively prove Proposition 2. Again let $g_{\alpha,\beta}\l(\cdot\r) = \max \l\{0,\beta \fobs\l(\cdot\r) -\alpha\r\}$. From Assumption A we know that Lebesgue almost everywhere on the support of $\ftar$, that $\fcon$ is equal to some value $u$ and that $\fcon$ is less than or equal to $u$ Lebesgue almost everywhere on $\rn^d$.  We will show that, $\alpha'= \frac{\varepsilon u}{1-\varepsilon}$, gives us  $g_{\alpha', \beta}=\ftar$ which, by Lemma 1, implies $\ftar=\ispkde$. Let $K$ be the support of $\ftar$.

	First consider $x\in K^C$. Almost everywhere on $K^C$ have
	\begin{eqnarray*}
		g_{\alpha',\beta}\l(x\r) 
		&=& \max\l\{0, \beta \fobs\l(x\r) - \frac{\varepsilon u}{1-\varepsilon}\r\}\\
		&=& \max\l\{0, \frac{1}{1-\varepsilon} \fcon\l(x\r) \varepsilon - \frac{\varepsilon u}{1-\varepsilon}\r\}\\
		&\le& \max\l\{0, \frac{1}{1-\varepsilon} u \varepsilon - \frac{\varepsilon u}{1-\varepsilon}\r\}\\
		&=& 0.
	\end{eqnarray*}
	So $g_{\alpha',\beta}$ is zero almost everywhere not on the support of $\ftar$. Now let $x \in K$, then Lebesgue almost everywhere in $K$ we have
	\begin{eqnarray*}
		\lefteqn{g_{\alpha',\beta}\l(x\r) }\\
		&=& \max\l\{0, \beta \fobs\l(x\r) - \frac{\varepsilon u}{1-\varepsilon}\r\}\\
		&=& \max\l\{0, \frac{1}{1-\varepsilon}\l(\l(1-\varepsilon\r) \ftar\l(x\r) + \fcon\l(x\r) \varepsilon\r) - \frac{\varepsilon u}{1-\varepsilon}\r\}\\
		&=& \max\l\{0, \frac{1}{1-\varepsilon}\l(\l(1-\varepsilon\r) \ftar\l(x\r) + u \varepsilon\r) - \frac{\varepsilon u}{1-\varepsilon}\r\}\\
		&=& \max\l\{0, \ftar\l(x\r) + \frac{\varepsilon u}{1-\varepsilon} - \frac{\varepsilon u}{1-\varepsilon}\r\}\\
		&=& \ftar\l(x\r).
	\end{eqnarray*}
	From this we have that $g_{\alpha',\beta}= \ftar$ which is a pdf, which by Lemma 1 is therefore equal to $\ispkde$. 
\end{proof}

\newpage
\subsection*{Experimental Results}
\begin{center}
	\begin{table*}[ht]
		{\small
		\caption{Mean and Standard Deviation of $D_{KL}\l(\widehat{f}||f_0\r)$}
		\hfill{}
		\begin{tabular}{|*{9}{c|}}
			\hline
			\multirow{2}{*}{Dataset}&\multirow{2}{*}{Algorithm} & \multicolumn{7}{|c|}{$\varepsilon$}\\ \cline{3-9}
			&&0.00&0.05&0.10&0.15&0.20&0.25&0.30\\
			\hline
			\multirow{3}{*}{banana}&SPKDE&0.19$\pm$0.04&0.15$\pm$0.03&0.14$\pm$0.03&0.17$\pm$0.07&0.23$\pm$0.08&0.35$\pm$0.1&0.51$\pm$0.2\\
&KDE&0.19$\pm$0.1&0.32$\pm$0.1&0.53$\pm$0.2&0.66$\pm$0.2&0.84$\pm$0.2&1.1$\pm$0.2&1.2$\pm$0.2\\
&RKDE&0.81$\pm$0.3&0.78$\pm$0.3&0.77$\pm$0.3&0.71$\pm$0.4&0.61$\pm$0.3&0.63$\pm$0.3&0.66$\pm$0.3\\
&rejKDE&0.19$\pm$0.2&0.35$\pm$0.2&0.52$\pm$0.2&0.7$\pm$0.2&0.84$\pm$0.2&1.1$\pm$0.2&1.3$\pm$0.2\\ \hline\multirow{3}{*}{breast-cancer}&SPKDE&3.2$\pm$0.7&3.4$\pm$0.8&3.2$\pm$0.8&3.5$\pm$0.9&3.7$\pm$1&3.9$\pm$1&4.2$\pm$1\\
&KDE&4$\pm$0.9&4.1$\pm$1&4$\pm$1&4.3$\pm$1&4.6$\pm$1&4.8$\pm$1&5$\pm$1\\
&RKDE&3.1$\pm$0.7&3.2$\pm$0.7&3$\pm$0.5&3.2$\pm$0.6&3.5$\pm$0.8&3.7$\pm$0.9&4$\pm$0.9\\
&rejKDE&4$\pm$0.8&4.1$\pm$1&4.1$\pm$1&4.3$\pm$1&4.6$\pm$1&4.8$\pm$1&4.9$\pm$1\\ \hline\multirow{3}{*}{diabetis}&SPKDE&0.8$\pm$0.05&0.84$\pm$0.09&0.8$\pm$0.1&0.84$\pm$0.1&0.87$\pm$0.1&0.91$\pm$0.08&0.89$\pm$0.09\\
&KDE&1.5$\pm$0.2&1.6$\pm$0.3&1.8$\pm$0.3&1.8$\pm$0.4&1.9$\pm$0.4&2$\pm$0.3&2$\pm$0.4\\
&RKDE&0.99$\pm$0.1&1$\pm$0.1&0.96$\pm$0.1&0.98$\pm$0.1&1$\pm$0.1&1$\pm$0.1&0.98$\pm$0.1\\
&rejKDE&1.5$\pm$0.2&1.6$\pm$0.2&1.8$\pm$0.4&1.9$\pm$0.5&1.9$\pm$0.5&2$\pm$0.4&2.1$\pm$0.5\\ \hline\multirow{3}{*}{german}&SPKDE&6.6$\pm$0.9&6.8$\pm$1&6.9$\pm$0.9&7$\pm$0.9&6.9$\pm$1&7.2$\pm$0.7&7.4$\pm$0.7\\
&KDE&7$\pm$1&7$\pm$1&7.3$\pm$0.9&7.4$\pm$1&7.4$\pm$1&7.6$\pm$0.8&7.8$\pm$0.8\\
&RKDE&5.4$\pm$0.7&5.6$\pm$0.8&5.8$\pm$0.7&5.8$\pm$0.8&5.9$\pm$0.8&6$\pm$0.7&6.2$\pm$0.6\\
&rejKDE&7$\pm$1&7.2$\pm$1&7.4$\pm$1&7.5$\pm$1&7.5$\pm$1&7.7$\pm$0.8&7.8$\pm$0.7\\ \hline\multirow{3}{*}{heart}&SPKDE&4$\pm$0.7&4$\pm$0.9&4.2$\pm$0.7&4.5$\pm$0.8&4.8$\pm$1&5.1$\pm$1&5.1$\pm$1\\
&KDE&4.7$\pm$1&5.1$\pm$1&5.3$\pm$1&5.6$\pm$1&5.8$\pm$1&6.2$\pm$1&6.6$\pm$1\\
&RKDE&3.8$\pm$0.9&3.8$\pm$0.8&3.9$\pm$0.6&4.2$\pm$0.8&4.2$\pm$0.9&4.5$\pm$1&4.9$\pm$1\\
&rejKDE&4.8$\pm$0.9&5.3$\pm$1&5.2$\pm$1&5.6$\pm$1&5.6$\pm$1&6.3$\pm$1&6.4$\pm$1\\ \hline\multirow{3}{*}{ionosphere scale}&SPKDE&13$\pm$2&13$\pm$2&13$\pm$2&13$\pm$2&12$\pm$2&11$\pm$2&11$\pm$1\\
&KDE&15$\pm$2&14$\pm$2&14$\pm$2&15$\pm$2&14$\pm$2&13$\pm$2&14$\pm$2\\
&RKDE&10$\pm$2&10$\pm$2&9.9$\pm$2&9.2$\pm$2&8$\pm$3&6.7$\pm$2&7.5$\pm$3\\
&rejKDE&16$\pm$2&15$\pm$2&15$\pm$2&14$\pm$1&14$\pm$2&14$\pm$2&14$\pm$2\\ \hline\multirow{3}{*}{ringnorm}&SPKDE&4.8$\pm$0.4&5.3$\pm$0.9&6.3$\pm$1&7.3$\pm$1&8$\pm$1&9.2$\pm$1&9$\pm$0.9\\
&KDE&4.9$\pm$0.4&5.7$\pm$0.9&7.4$\pm$1&8.6$\pm$1&11$\pm$2&13$\pm$2&14$\pm$0.7\\
&RKDE&4.4$\pm$0.2&3.8$\pm$0.6&4$\pm$0.6&4.1$\pm$0.6&4.7$\pm$1&5.7$\pm$0.6&6.1$\pm$0.5\\
&rejKDE&5$\pm$0.3&5.8$\pm$0.8&7.3$\pm$1&8.5$\pm$1&10$\pm$2&13$\pm$1&14$\pm$0.8\\ \hline\multirow{3}{*}{sonar scale}&SPKDE&30$\pm$7&31$\pm$8&30$\pm$8&33$\pm$7&33$\pm$7&33$\pm$7&35$\pm$7\\
&KDE&31$\pm$6&31$\pm$9&31$\pm$8&32$\pm$8&34$\pm$7&35$\pm$8&35$\pm$8\\
&RKDE&32$\pm$9&32$\pm$7&32$\pm$7&31$\pm$7&33$\pm$8&34$\pm$7&35$\pm$7\\
&rejKDE&31$\pm$9&32$\pm$8&32$\pm$9&34$\pm$7&33$\pm$8&33$\pm$7&36$\pm$8\\ \hline\multirow{3}{*}{splice}&SPKDE&21$\pm$0.3&21$\pm$0.2&21$\pm$0.3&21$\pm$0.3&21$\pm$0.2&21$\pm$0.2&20$\pm$0.4\\
&KDE&21$\pm$0.3&21$\pm$0.2&21$\pm$0.2&21$\pm$0.3&21$\pm$0.3&21$\pm$0.2&20$\pm$0.2\\
&RKDE&21$\pm$0.5&21$\pm$0.5&21$\pm$0.6&21$\pm$0.4&21$\pm$0.4&20$\pm$0.6&20$\pm$0.6\\
&rejKDE&21$\pm$0.3&21$\pm$0.3&21$\pm$0.2&21$\pm$0.2&21$\pm$0.3&21$\pm$0.2&20$\pm$0.2\\ \hline\multirow{3}{*}{thyroid}&SPKDE&0.59$\pm$0.2&0.69$\pm$0.4&1.1$\pm$0.8&1.3$\pm$0.8&1.2$\pm$0.7&1.1$\pm$0.7&1.3$\pm$0.6\\
&KDE&0.6$\pm$0.2&4.5$\pm$3&11$\pm$7&16$\pm$7&20$\pm$7&22$\pm$5&32$\pm$8\\
&RKDE&0.56$\pm$0.1&0.88$\pm$0.5&1.3$\pm$0.9&1.6$\pm$1&1.5$\pm$0.8&1.3$\pm$0.6&1.4$\pm$0.8\\
&rejKDE&0.59$\pm$0.2&4.9$\pm$3&8.6$\pm$5&17$\pm$6&22$\pm$9&25$\pm$7&33$\pm$8\\ \hline\multirow{3}{*}{twonorm}&SPKDE&4.8$\pm$0.4&4.6$\pm$0.5&4.6$\pm$0.5&4.8$\pm$0.7&5$\pm$0.9&5.4$\pm$0.9&6.2$\pm$1\\
&KDE&4.8$\pm$0.4&4.8$\pm$0.5&4.9$\pm$0.5&5.1$\pm$0.6&5.2$\pm$0.9&5.7$\pm$0.9&6.6$\pm$1\\
&RKDE&4.2$\pm$0.4&3.8$\pm$0.4&3.9$\pm$0.5&4$\pm$0.5&4.1$\pm$0.7&4.7$\pm$0.9&5.5$\pm$0.8\\
&rejKDE&4.9$\pm$0.5&4.7$\pm$0.6&4.9$\pm$0.5&5$\pm$0.7&5.2$\pm$0.8&5.7$\pm$0.9&6.6$\pm$1\\ \hline\multirow{3}{*}{waveform}&SPKDE&4.8$\pm$0.8&4.8$\pm$0.8&5.2$\pm$1&5.6$\pm$0.9&6.1$\pm$0.8&6.2$\pm$0.8&6.7$\pm$0.5\\
&KDE&5$\pm$0.7&4.9$\pm$0.7&5.3$\pm$1&5.7$\pm$1&6.3$\pm$0.9&6.2$\pm$0.8&6.8$\pm$0.4\\
&RKDE&4.5$\pm$0.7&4.4$\pm$0.6&4.7$\pm$0.9&5.2$\pm$1&5.6$\pm$0.8&5.7$\pm$0.7&6.1$\pm$0.4\\
&rejKDE&4.9$\pm$0.7&4.9$\pm$0.7&5.4$\pm$1&5.8$\pm$0.9&6.2$\pm$0.9&6.3$\pm$0.8&6.8$\pm$0.4\\ \hline

		\end{tabular}}
		\label{tb:fhatf0}
		\hfill{}
	\end{table*}
\end{center}
\begin{center}
	\begin{table*}[ht]
		{\small
		\caption{Mean and Standard Deviation of $D_{KL}\l(f_0||\widehat{f}\r)$}
		\hfill{}
		\begin{tabular}{|*{9}{c|}}
			\hline
			\multirow{2}{*}{Dataset}&\multirow{2}{*}{Algorithm} & \multicolumn{7}{|c|}{$\varepsilon$}\\ \cline{3-9}
			&&0.00&0.05&0.10&0.15&0.20&0.25&0.30\\
			\hline
			\multirow{3}{*}{banana}&SPKDE&-0.57$\pm$0.2&-0.69$\pm$0.2&-0.73$\pm$0.2&-0.78$\pm$0.2&-0.81$\pm$0.2&-0.79$\pm$0.2&-0.75$\pm$0.2\\
&KDE&-0.85$\pm$0.2&-0.83$\pm$0.2&-0.8$\pm$0.1&-0.8$\pm$0.1&-0.8$\pm$0.1&-0.77$\pm$0.1&-0.74$\pm$0.1\\
&RKDE&15$\pm$1e+01&12$\pm$9&11$\pm$9&8.6$\pm$9&5.7$\pm$7&6.5$\pm$9&7.1$\pm$9\\
&rejKDE&-0.73$\pm$0.2&-0.8$\pm$0.2&-0.8$\pm$0.2&-0.82$\pm$0.1&-0.82$\pm$0.1&-0.79$\pm$0.1&-0.75$\pm$0.1\\ \hline\multirow{3}{*}{breast-cancer}&SPKDE&-1.7$\pm$0.7&-1.8$\pm$0.7&-2$\pm$0.6&-2$\pm$0.6&-2.2$\pm$0.6&-2.4$\pm$0.6&-2.6$\pm$0.7\\
&KDE&-1.8$\pm$0.7&-1.9$\pm$0.6&-2.1$\pm$0.6&-2.1$\pm$0.6&-2.3$\pm$0.6&-2.4$\pm$0.6&-2.6$\pm$0.7\\
&RKDE&2.2$\pm$2&1.8$\pm$3&1.4$\pm$2&0.77$\pm$2&0.29$\pm$2&-0.025$\pm$2&-0.43$\pm$2\\
&rejKDE&0.4$\pm$2&0.1$\pm$2&-0.35$\pm$2&-0.69$\pm$1&-1$\pm$1&-1.2$\pm$1&-1.4$\pm$1\\ \hline\multirow{3}{*}{diabetis}&SPKDE&-3.4$\pm$0.8&-3.7$\pm$0.7&-4$\pm$0.6&-4.2$\pm$0.6&-4.5$\pm$0.5&-4.6$\pm$0.4&-4.8$\pm$0.5\\
&KDE&-3.9$\pm$0.5&-4.1$\pm$0.5&-4.3$\pm$0.4&-4.4$\pm$0.3&-4.6$\pm$0.4&-4.7$\pm$0.3&-5$\pm$0.3\\
&RKDE&-1.3$\pm$1&-1.7$\pm$2&-1.7$\pm$1&-2$\pm$1&-2.1$\pm$2&-2.6$\pm$2&-2.5$\pm$1\\
&rejKDE&-3.7$\pm$0.7&-3.9$\pm$0.6&-4.2$\pm$0.5&-4.3$\pm$0.4&-4.5$\pm$0.4&-4.6$\pm$0.4&-4.9$\pm$0.4\\ \hline\multirow{3}{*}{german}&SPKDE&-0.067$\pm$0.4&-0.15$\pm$0.4&-0.21$\pm$0.4&-0.26$\pm$0.4&-0.32$\pm$0.4&-0.41$\pm$0.4&-0.48$\pm$0.4\\
&KDE&-0.043$\pm$0.4&-0.12$\pm$0.4&-0.19$\pm$0.4&-0.23$\pm$0.4&-0.29$\pm$0.4&-0.38$\pm$0.4&-0.45$\pm$0.4\\
&RKDE&0.71$\pm$0.5&0.62$\pm$0.5&0.56$\pm$0.7&0.52$\pm$0.6&0.45$\pm$0.6&0.35$\pm$0.6&0.29$\pm$0.6\\
&rejKDE&0.26$\pm$0.5&0.16$\pm$0.5&0.07$\pm$0.5&0.039$\pm$0.5&-0.026$\pm$0.5&-0.12$\pm$0.5&-0.2$\pm$0.5\\ \hline\multirow{3}{*}{heart}&SPKDE&0.7$\pm$0.7&0.44$\pm$0.9&0.17$\pm$0.7&0.071$\pm$0.7&-0.044$\pm$0.8&-0.21$\pm$0.8&-0.32$\pm$0.8\\
&KDE&0.71$\pm$0.7&0.46$\pm$0.8&0.2$\pm$0.7&0.12$\pm$0.7&0.0049$\pm$0.8&-0.15$\pm$0.8&-0.26$\pm$0.7\\
&RKDE&2.4$\pm$1&1.9$\pm$0.9&1.5$\pm$0.8&1.4$\pm$1&1.2$\pm$0.8&1$\pm$0.9&0.82$\pm$0.8\\
&rejKDE&1.3$\pm$0.9&1$\pm$0.9&0.68$\pm$0.9&0.6$\pm$0.9&0.42$\pm$0.9&0.23$\pm$0.9&0.12$\pm$0.8\\ \hline\multirow{3}{*}{ionosphere scale}&SPKDE&7.5$\pm$1&7.3$\pm$1&7.2$\pm$1&7.1$\pm$1&7$\pm$1&7$\pm$1&7.5$\pm$2\\
&KDE&7.8$\pm$1&7.6$\pm$1&7.5$\pm$1&7.3$\pm$1&7.3$\pm$1&7.3$\pm$1&7.7$\pm$2\\
&RKDE&7.6$\pm$1&7.5$\pm$1&7.4$\pm$1&7.4$\pm$2&7.6$\pm$2&8.9$\pm$4&9.9$\pm$4\\
&rejKDE&7.7$\pm$1&7.6$\pm$1&7.4$\pm$1&7.2$\pm$1&7.2$\pm$1&7.2$\pm$1&7.6$\pm$2\\ \hline\multirow{3}{*}{ringnorm}&SPKDE&-3$\pm$0.4&-8$\pm$1&-10$\pm$0.8&-12$\pm$0.8&-13$\pm$0.7&-13$\pm$0.4&-14$\pm$0.4\\
&KDE&-3$\pm$0.4&-7.8$\pm$1&-9.8$\pm$0.8&-11$\pm$0.8&-12$\pm$0.7&-13$\pm$0.4&-14$\pm$0.4\\
&RKDE&-3.2$\pm$0.4&-8.1$\pm$1&-10$\pm$0.8&-12$\pm$0.8&-13$\pm$0.7&-13$\pm$0.4&-14$\pm$0.4\\
&rejKDE&-3.1$\pm$0.4&-7.9$\pm$1&-9.9$\pm$0.8&-12$\pm$0.8&-12$\pm$0.7&-13$\pm$0.4&-14$\pm$0.4\\ \hline\multirow{3}{*}{sonar scale}&SPKDE&-16$\pm$6&-16$\pm$5&-17$\pm$5&-17$\pm$5&-18$\pm$5&-19$\pm$5&-19$\pm$5\\
&KDE&-16$\pm$6&-16$\pm$5&-17$\pm$5&-17$\pm$5&-18$\pm$5&-19$\pm$5&-19$\pm$5\\
&RKDE&-16$\pm$6&-16$\pm$5&-17$\pm$5&-16$\pm$7&-18$\pm$5&-19$\pm$5&-19$\pm$5\\
&rejKDE&-8.2$\pm$9&-9.4$\pm$8&-9.6$\pm$8&-10$\pm$8&-11$\pm$8&-11$\pm$8&-11$\pm$8\\ \hline\multirow{3}{*}{splice}&SPKDE&34$\pm$0.3&34$\pm$0.3&34$\pm$0.3&34$\pm$0.2&34$\pm$0.2&34$\pm$0.2&34$\pm$0.2\\
&KDE&34$\pm$0.3&34$\pm$0.3&34$\pm$0.3&34$\pm$0.2&34$\pm$0.2&34$\pm$0.2&34$\pm$0.2\\
&RKDE&34$\pm$0.3&34$\pm$0.3&34$\pm$0.2&34$\pm$0.2&34$\pm$0.2&34$\pm$0.2&34$\pm$0.2\\
&rejKDE&34$\pm$0.3&34$\pm$0.3&34$\pm$0.3&34$\pm$0.2&34$\pm$0.2&34$\pm$0.2&34$\pm$0.2\\ \hline\multirow{3}{*}{thyroid}&SPKDE&-0.86$\pm$0.9&-4.1$\pm$0.9&-5.1$\pm$1&-5.9$\pm$0.5&-6.4$\pm$0.4&-6.7$\pm$0.2&-6.8$\pm$0.2\\
&KDE&-0.89$\pm$0.7&-4$\pm$0.7&-5$\pm$0.8&-5.6$\pm$0.4&-6.1$\pm$0.3&-6.3$\pm$0.2&-6.4$\pm$0.2\\
&RKDE&-0.71$\pm$0.9&-3.9$\pm$0.9&-5$\pm$1&-5.8$\pm$0.4&-6.3$\pm$0.3&-6.6$\pm$0.2&-6.8$\pm$0.2\\
&rejKDE&-0.88$\pm$0.8&-4.1$\pm$0.7&-5.1$\pm$0.8&-5.7$\pm$0.4&-6.1$\pm$0.3&-6.4$\pm$0.2&-6.5$\pm$0.2\\ \hline\multirow{3}{*}{twonorm}&SPKDE&-3.2$\pm$0.6&-3.8$\pm$0.5&-4$\pm$0.5&-4.4$\pm$0.4&-4.6$\pm$0.3&-4.8$\pm$0.4&-5.1$\pm$0.4\\
&KDE&-3.1$\pm$0.6&-3.7$\pm$0.5&-3.9$\pm$0.4&-4.3$\pm$0.4&-4.5$\pm$0.3&-4.7$\pm$0.4&-5$\pm$0.5\\
&RKDE&-3.3$\pm$0.6&-3.9$\pm$0.5&-4.1$\pm$0.5&-4.5$\pm$0.4&-4.7$\pm$0.3&-4.9$\pm$0.4&-5.2$\pm$0.5\\
&rejKDE&-3.2$\pm$0.6&-3.8$\pm$0.5&-4$\pm$0.5&-4.3$\pm$0.4&-4.6$\pm$0.3&-4.8$\pm$0.4&-5.1$\pm$0.5\\ \hline\multirow{3}{*}{waveform}&SPKDE&-7.6$\pm$0.3&-7.7$\pm$0.3&-7.9$\pm$0.3&-8$\pm$0.4&-8.1$\pm$0.3&-8.3$\pm$0.3&-8.3$\pm$0.3\\
&KDE&-7.5$\pm$0.3&-7.7$\pm$0.4&-7.8$\pm$0.4&-8$\pm$0.4&-8.1$\pm$0.4&-8.2$\pm$0.4&-8.3$\pm$0.3\\
&RKDE&-7.6$\pm$0.3&-7.8$\pm$0.3&-8$\pm$0.4&-8.1$\pm$0.4&-8.2$\pm$0.4&-8.4$\pm$0.4&-8.4$\pm$0.3\\
&rejKDE&-7.6$\pm$0.3&-7.8$\pm$0.4&-7.9$\pm$0.4&-8$\pm$0.4&-8.2$\pm$0.4&-8.3$\pm$0.4&-8.4$\pm$0.3\\ \hline		\end{tabular}}

		\label{tb:f0fhat}
		\hfill{}

	\end{table*}
\end{center}
\bibliography{rvdm}
\bibliographystyle{plain}

\end{document}